\renewcommand{\mathbf}[1]{{\bm{#1}}}
\newcommand{\ab}{\mathbf{a}}
\newcommand{\fb}{\mathbf{f}}
\newcommand{\gb}{\mathbf{g}}
\newcommand{\ib}{\mathbf{i}}
\newcommand{\jb}{\mathbf{j}}
\newcommand{\rb}{\mathbf{r}}
\newcommand{\ub}{\mathbf{u}}
\newcommand{\vb}{{\bm{v}}}
\newcommand{\xb}{\mathbf{x}}
\newcommand{\zb}{\mathbf{z}}
\newcommand{\Fb}{\mathbf{F}}
\newcommand{\Rb}{\mathbf{R}}
\newcommand{\Ub}{\mathbf{U}}
\newcommand{\Wb}{\mathbf{W}}
\newcommand{\Xb}{\mathbf{X}}
\newcommand{\Zb}{\mathbf{Z}}
\newcommand{\Gcal}{\mathcal{G}}
\newcommand{\Hcal}{\mathcal{H}}
\newcommand{\Ical}{\mathcal{I}}
\newcommand{\Jcal}{\mathcal{J}}
\newcommand{\Lcal}{\mathcal{L}}
\newcommand{\Mcal}{\mathcal{M}}
\newcommand{\Ncal}{\mathcal{N}}
\newcommand{\Tcal}{{\mathcal{T}}}
\newcommand{\Xcal}{\mathcal{X}}
\newcommand{\Ycal}{\mathcal{Y}}
\newcommand{\Zcal}{\mathcal{Z}}
\newcommand{\EE}{\mathbb{E}} %
\newcommand{\RR}{\mathbb{R}} %
\newcommand*{\omegab}{{\bm{\omega}}}
\newcommand*{\thetab}{\bm{\theta}}
\newcommand*{\taub}{\bm{\tau}}
\newcommand*{\gammab}{\bm{\gamma}}
\newcommand*{\mub}{\bm{\mu}}
\crefname{figure}{Fig.}{Figs.}
\crefname{definition}{Defn.}{Defns.}
\crefname{corollary}{Cor.}{Cors.}
\crefname{proposition}{Prop.}{Props.}
\crefname{theorem}{Thm.}{Thms.}
\crefname{remark}{Remark}{Remarks}
\crefname{principle}{Principle}{Principles}
\crefname{lemma}{Lemma}{Lemmata}
\crefname{claim}{Claim}{Claims}
\crefname{table}{Tab.}{Tabs.}
\crefname{section}{\S}{\S\S}
\crefname{subsection}{\S}{\S\S}
\crefname{subsubsection}{\S}{\S\S}
\crefname{assumption}{Asm.}{Asms.}
\crefname{appendix}{Appx.}{Appx.}
\crefname{equation}{Eq.}{Eqs.}
\crefname{example}{Example}{Examples}
\newcommand{\BlackBox}{\rule{1.5ex}{1.5ex}}  %
\def\QED{~\rule[-1pt]{5pt}{5pt}\par\medskip}
\newenvironment{proof}{\par\noindent{\bf Proof\ }}{\hfill\BlackBox\\[2mm]}
\theoremstyle{plain} %
\newtheorem{theorem}{Theorem}
\numberwithin{theorem}{section}
\newtheorem{lemma}[theorem]{Lemma}
\newtheorem{proposition}[theorem]{Proposition}
\newtheorem{definition}[theorem]{Definition}
\theoremstyle{definition} %
\newtheorem{assumption}[theorem]{Assumption}
\theoremstyle{remark} %
\newcommand*{\addFileDependency}[1]{%
\typeout{(#1)}%
\@addtofilelist{#1}
\IfFileExists{#1}{}{\typeout{No file #1.}}
}\makeatother
\title{Targeted Reduction of Causal Models}
\author[1]{Armin~Keki\'c}
\author[1]{Bernhard~Sch\"olkopf}
\author[1]{Michel~Besserve}
\affil[1]{%
    Max Planck Institute for Intelligent Systems\\
    T\"ubingen, Germany
}
\begin{document}
\maketitle

\begin{abstract}\looseness-1
Why does a phenomenon occur?
Addressing this question is central to most scientific inquiries and often relies on simulations of scientific models. 
As models become more intricate, deciphering the causes behind phenomena in high-dimensional spaces of interconnected variables becomes increasingly challenging.
Causal Representation Learning (CRL) offers a promising avenue to uncover interpretable causal patterns within these simulations through an interventional lens.
However, developing general CRL frameworks suitable for practical applications remains an open challenge.
We introduce \textit{Targeted Causal Reduction} (TCR), a method for condensing complex intervenable models into a concise set of causal factors that explain a specific target phenomenon.
We propose an information theoretic objective to learn TCR from interventional data of simulations, establish identifiability for continuous variables under shift interventions and present a practical algorithm for learning TCRs.
Its ability to generate interpretable high-level explanations from complex models is demonstrated on toy and mechanical systems, illustrating its potential to assist scientists in the study of complex phenomena in a broad range of disciplines.\footnotemark\label{fn:myfootnote}
\end{abstract}

\section{INTRODUCTION}
\label{sec:introdcution}
Numerical models are indispensable in science for simulating real-world systems and generating \textit{etiological explanations}—identifying the causes of specific phenomena.
\footnotetext{\hyperref[fn:myfootnote]{\hphantom{}}Code is available at: \href{https://github.com/akekic/targeted-causal-reduction.git}{https://github.com/akekic/targeted-causal-reduction.git}.}%
General circulation models, for example, shed light on the causes of global warming \citep{grassl2000status}, while computational brain models explore the origins of neurological pathologies \citep{breakspear2017dynamic,deco2014great}.
These examples illustrate the increasing complexity of numerical scientific models, designed to faithfully capture the large number of mechanisms at play in these systems.
However, this complexity comes at a cost: expanding parameter spaces and heightened computational demands. 
This trend, in turn, impacts the ability to generate high-level explanations, understandable by scientists and decision makers \citep{reichstein2019deep,safavi2023uncovering}. 
\par \looseness -1
Effective human explanations are often based on understanding a few causal relations between a limited number of variables.
While the simulation of complex systems might rely on numerous simple mechanisms, extracting overarching causal relations between fewer relevant high-level variables remains largely unaddressed. 
In particular, while causal representation learning tries to explain data based on a learned latent causal graph \citep{liang2023causal,squires2023linear,von2023nonparametric}, it currently has theoretical and practical limitations.
CRL largely relies on preserving all information in the data to provide recoverability guaranties for the latent causes, while the idea of a high-level representation is precisely to discard irrelevant data. 
\par
In contrast, Causal Model Reduction (CMR), which aims to map a low-level causal model to a simpler high-level model with fewer or lower-dimensional variables, embraces the purpose of eliminating irrelevant information.
However, existing CMR approaches, such as causal abstractions \citep{geiger2023causal,zennaro2023jointly} and Causal Feature Learning (CFL) \citep{chalupka2016unsupervised} are not well-suited to causally describe many scientific models: they use discrete variables and typically rely on \textit{hard} interventions, disconnecting causal variables from their parents. 
The following example shows, however, that simpler high-level causal models for continuous variables and \textit{soft} interventions are natural and useful in domains such as physics. 

\begin{figure*}
\centering
     \begin{subfigure}[t]{0.56\textwidth}
         \centering
         \includegraphics[width=\linewidth]{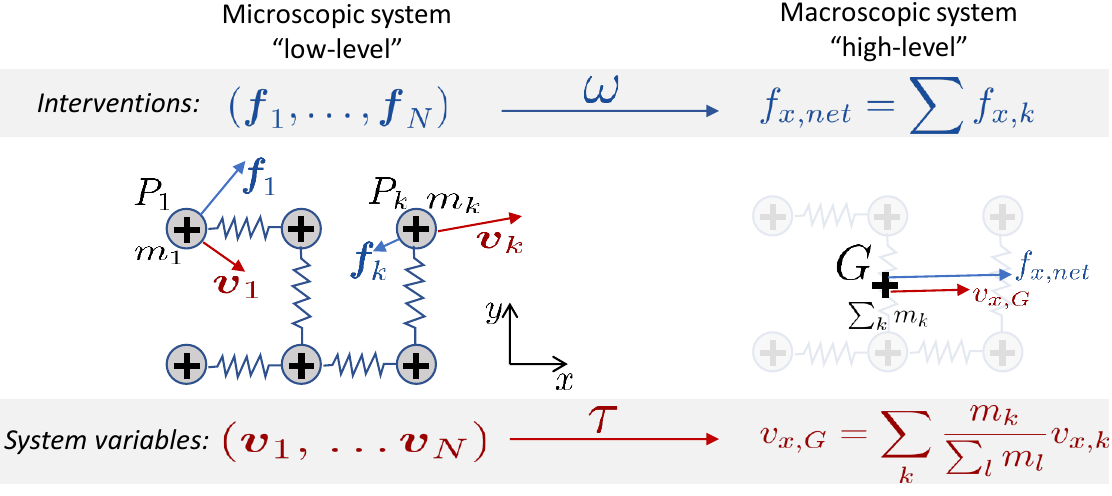}
	\caption{\small\label{fig:mechaexpl}}
     \end{subfigure}
     \hfill
     \begin{subfigure}[t]{0.37\textwidth}
         \centering
         \includegraphics[width=\linewidth]{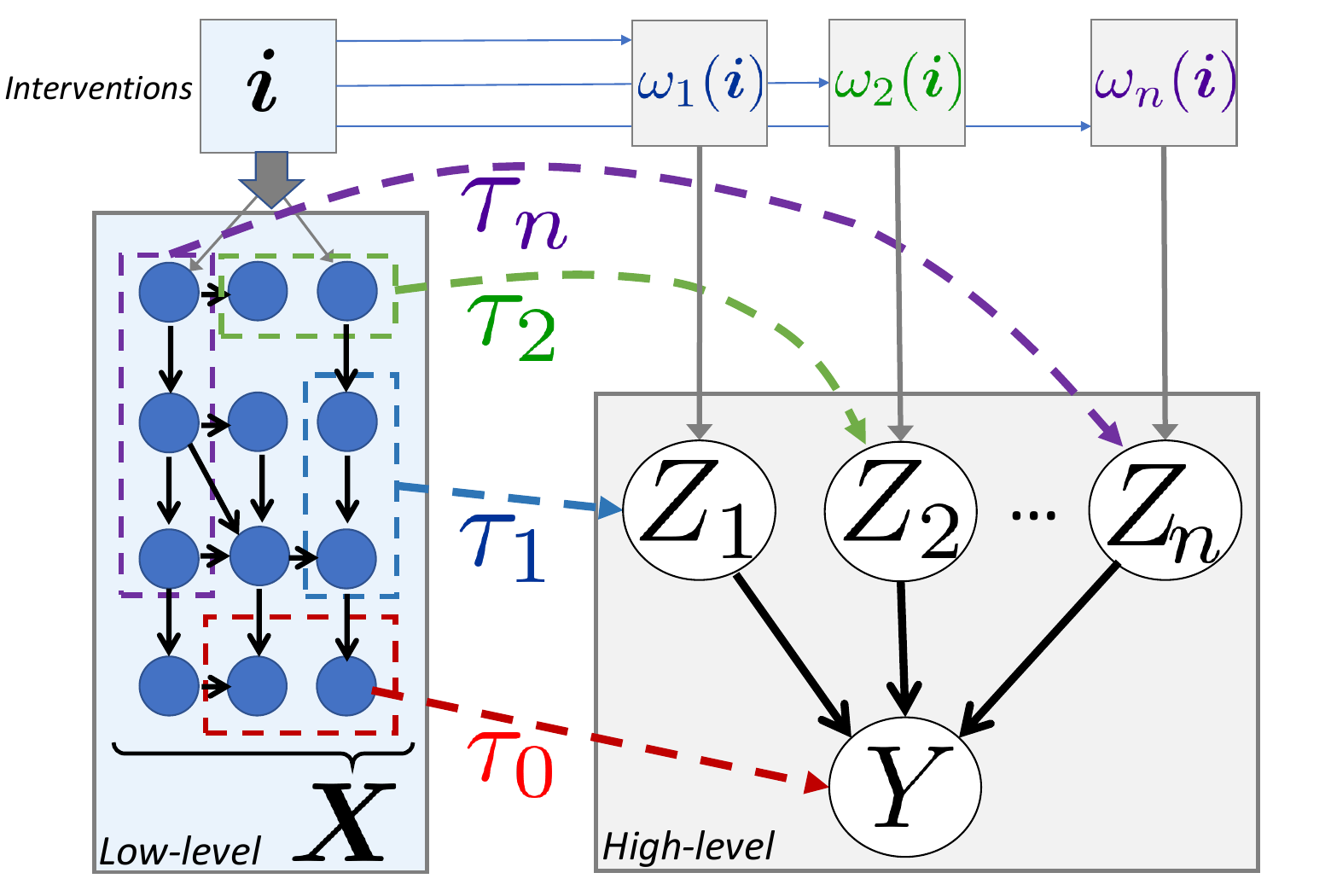}
    \caption{\small
    \label{fig:constrans}}
     \end{subfigure}
     \caption{\small 
     \textbf{Targeted Causal Reduction.}
     (a) Example targeted model reduction: a model of the dynamics of a system of point masses connected by springs can be reduced to the trajectory of its center of mass.
     (b) Overview of TCR. 
     Low-level variables $\Xb$ (simulation) are mapped to high-level variables $(\Zb, Y)$ with a fixed causal structure.
     The target $Y$ is known, while the causes $\Zb$ and the high-level causal mechanism are learned.
     Additionally, we learn a mapping from low-level shift interventions $\ib$ to high-level shift interventions $\omegab(\ib)$.
     }
\end{figure*}
\looseness-1
Consider a system of point masses connected by springs shown in Fig.~\ref{fig:mechaexpl}, where each mass is influenced by random external forces. 
Its trajectory under the intervention of external forces can be accurately predicted by simulating the coupled equations of motion of individual point masses. 
However, if we are only interested in a particular macroscopic ``target'' variable of this system: the horizontal speed of the system's center of mass at the end of an experiment, a key result from classical physics is that its motion will depend only on the sum of all horizontal components of external forces applied over time.
We thus obtain a form of CMR: a much simpler system that accurately accounts for the effect of interventions in the system on the target variable.
\par
This highlights the core elements needed for CMR in scientific models:
(1) there is a clearly defined macroscopic target variable,
(2) continuous low-level variables are reduced to a smaller set of continuous high-level variables, and 
(3) low-level interventions are soft: exerted forces modify the future trajectory but do not suppress the influence of other factors such as the past state of the system.
Moreover, it is the combination of many low-level soft interventions across time that corresponds to a relevant high-level intervention. 
These three aspects are commonly found characteristics of studied real-world systems, motivating the development of CMR algorithms adapted to this setting. 
\par \looseness-1
In this paper, we introduce \emph{Targeted Causal Reduction} (TCR), depicted in Fig.~\ref{fig:constrans}, a novel approach designed to simplify complex \emph{low-level models} into \emph{high-level models}, focused on explaining causal influences on an observable target variable $Y$.
The key signal we use for learning are \emph{interventions} applied to the low-level variables, which are mapped to high-level interventions in a way that captures the causal influences on $Y$ in a concise and interpretable way.
We formulate this learning objective as a Kullback-Leibler divergence between the fitted high-level interventional model and the reduced low-level interventional distribution, leading to a practical learning algorithm for the case of linear reductions.
Applications to high-dimensional synthetic and scientific models demonstrates accuracy and interpretability of our approach. 
We refer to the appendix for more related work (\ref{app:relwork}) and proofs (\ref{app:proofs}).

\section{BACKGROUND}
\label{sec:bckgd}
\subsection{Structural causal models}
Causal dependencies between variables can be described using \textit{Structural Causal Models} (SCM)~\citep{causality_book}.
\paragraph{Notation.} We use boldface for column vectors, and $\ib_S$ for the subvector of $\ib$ restricted to the components in set $S$.
\begin{definition}[SCM]\label{def:SCM}
    An $n$-dimensional structural causal model is a triplet $\Mcal=(\mathcal{G},\mathbb{S},P_\Ub)$ consisting of:
    \begin{itemize}
        \item a joint distribution $P_\Ub$ over exogenous variables $\{U_j\}_{j\leq n}$,
        \item a directed graph $\mathcal{G}$ with $n$ vertices,
        \item a set $\mathbb{S}=\{X_j \coloneqq f_j(\textbf{Pa}_j,U_j), j=1,\dots,n\}$ of structural equations, 
        where $\textbf{Pa}_j$ are the variables indexed by the set of parents of vertex $j$ in $\mathcal{G}$,
    \end{itemize} 
    such that for almost every $\ub$, the system $\{x_j \coloneqq f_j(\textbf{pa}_j,u_j)\}$ has a unique solution $\xb=\gb(\ub)$, with $\gb$ measurable. 
\end{definition}
\looseness -1
The unique solvability condition is included in this definition because we consider a general class of SCMs by allowing cycles, that is, $\Gcal$ may not be a DAG.
Moreover, we allow hidden confounding through the potential lack of independence between the exogenous variables $\{U_j\}$.
See \citet{bongers2021foundations} for a thorough study of these models. 
Under these conditions, the distribution $P_{\Ub}$ entails a well-defined joint distribution over the endogenous variables $P(\Xb)$. 
\par
Interventions in SCMs involve replacing one or more structural equations, potentially modifying exogenous distributions, and adding or removing arrows in the original graph to reflect changes in dependencies between variables.
An intervention transforms the original model $\Mcal =(\mathcal{G},\mathbb{S},P_\Ub)$ into an intervened model $\Mcal^{(\ib)} =(\mathcal{G}^{(\ib)},\mathbb{S}^{(\ib)},P_\Ub^{(\ib)})$, where $\ib$ is the vector parameterizing the intervention.
The base probability distribution of the unintervened model is denoted $P_{\Mcal}^{(0)}(\Xb)$ or simply $P_{\Mcal}(\Xb)$ and the interventional distribution associated with $\Mcal^{(\ib)}$ is denoted $P_{\Mcal}^{(\ib)}(\Xb)$.
\par 
\looseness -1
Classical $do$-interventions set a structural equation to a constant, removing all influences of the parents on the intervened variable. 
This can be problematic for studying how the influence of low-level variables is propagated to the target, since for simultaneous interventions, the effects of some interventions can be masked by others.
The probability of such masking increases as the number of low-level variables grows.
\textit{Soft interventions}, on the other hand, modify an equation while keeping the set of parents unchanged.
This is more appropriate in our setting, since it propagates the information from all interventions to the target simultaneously.
\par \looseness -1
Large classes of soft interventions can be designed to match domain knowledge \citep{besserve2022learning}. 
Notably, \textit{shift interventions} modify the structural equation of endogenous variable $l$ through shifting it by a scalar parameter $i$
\begin{equation}
\{X_l\coloneqq f_l(\textbf{Pa}_l,U_l)\} \mapsto \{X_l\coloneqq f_l(\textbf{Pa}_l,U_l)+i\}\,.
\end{equation}
These can be combined to form multi-node interventions with vector parameter $\ib$.

\subsection{Simulations and Causal Models}
We use the term \textit{scientific model} to refer to a generative model that relies on a set of equations to represent a phenomenon.
What distinguishes such models from generative models in machine learning is their decomposability into elementary functions, encoding domain knowledge about the mechanisms being investigated. 
Simulations based on the numerical solution of scientific models can often be expressed as SCMs; 
This notably includes Ordinary (ODE)~\citep{mooij2013ordinary} and Stochastic Differential Equations (SDE)~\citep{hansen2014causal}. 
A simulator can thus be seen as a low-level causal model, from which samples of unintervened and intervened distributions can be generated.
This forms the basis of the causal framework for learning high-level explanations for simulators developed in Sec.~\ref{ssec:framew}.

\subsection{Causal Model Reductions (CMR)}

We consider as CMR any (possibly approximate) mapping from a low-level SCM $\Lcal$ to a simpler high-level SCM $\Hcal$.
An example is CFL \citep{chalupka2014visual,chalupka2016unsupervised}, which achieves a CMR by merging values of a large observation space to yield discrete high-level variables taking values in a small finite set.
Consider:
\begin{itemize}
    \item $\Lcal$ has a vector of endogenous variables $\Xb$ with range $\Xcal$ and a set of interventions $\Ical$,
    \item $\Hcal$ has a vector of endogenous variables $\Zb$ with range $\Zcal$ and a set of interventions $\Jcal$.
\end{itemize}
Starting from the distribution of the low-level model $P_{\Lcal}(\Xb)$, a deterministic mapping $\tau:\Xcal\to\Zcal$ generates a joint distribution on the high-level variables that is the push-forward distribution of $P_{\Lcal}(\Xb)$ by $\tau$, denoted $\tau_\# [P_{\Lcal}(\Xb)]$ such that
\[
\tau(\Xb) \sim \tau_\# [P_{\Lcal}(\Xb)]\,.
\]
The low-level interventional distributions can be pushed forward to the high-level in the same way.
\par
A general framework for CMR is based on the notion of \textit{exact transformation}, which ensures \textit{interventional consistency} by matching the push-forward low-level distributions to the high-level ones.
\begin{definition}[Exact transformation \citep{rubenstein2017causal}]
A map $\tau:\Xcal\to \Zcal$ is an exact transformation from $\Lcal$ to  $\Hcal$ if it is surjective, and there exists a surjective \textit{intervention map} $\omega:\Ical\to\Jcal$ such that for all  $\ib \in\Ical$
\[
\tau_{\#}[P_{\Lcal}^{(i)}(\Xb)] = P_{\Hcal}^{(\omega(i))}(\Zb)\,.
\]
\end{definition}
The set of possible $\tau$ can be restricted to \textit{constructive transformations}, where high-level variables depend only on non-overlapping subsets of low-level variables.
This eases interpretability of CMR and comes with characterization results  \citep{beckers2019abstracting,geiger2023causal}.
\begin{definition}
$\tau:\Xcal\to\Zcal$ is a constructive transformation between model $\Lcal$ and $\Hcal$ if there exists an alignment map $\pi$ relating indices of each high-level endogenous variable to a subset of indices of low-level endogenous variables such that for all $k\neq l$, $\pi(k)\cap \pi(l)=\emptyset$ and for each component $\tau_k$ of $\tau$ it exists a function $\bar{\tau}_k$ such that for all $\xb$ in $\Xcal$,
\[
\tau_k (\xb) = \bar{\tau}_k(\xb_{\pi(k)})\,.
\]
\end{definition}
The intervention map $\omega$ of constructive exact transformations are required to be constructive as well, such that acting on high-level variable $k$ depends only on low-level interventions acting on variables in $\pi(k)$ (see App.~\ref{app:supbkgd}).

\section{THEORETICAL ANALYSIS}
\label{sec:theoretical_analysis}

As described in Fig.~\ref{fig:constrans}, we consider endogenous variables of a low-level model gathered in a (high-dimensional) random vector $\Xb$.
A target scalar variable $Y=\tau_0(\Xb)$ quantifies a property of interest of this model, and can be thought of as quantifying the presence or magnitude of a \textit{phenomenon} in the data,  using \textit{detector} $\tau_0$.
To generate a high-level causal explanation of this phenomenon, we learn a high-level SCM with a fixed causal structure, where the known effect variable $Y$ is caused by $n$ learned independent high-level variables $Z_k$.
The low-level variables $\Xb$ are approximately mapped to the high-level variables ${\Zb}$ using a constructive transformation and an associated constructive interventional map with the same alignment $\pi$.

\subsection{TCR framework}\label{ssec:framew}

Our reduction framework has the following elements:
\par
\looseness-1 (1) A low-level SCM $\Lcal$
    with $N$ endogenous variables $\{X_1, \dots,X_N\}$
     and corresponding exogenous variables $\{U_k\}_{k= 1..N}$ equipped with joint distribution $P(\Ub)$. 
    A set of low-level shift interventions parameterized by vector $\ib{\in} \Ical$ with distribution $P(\ib)$, with each component $i_k$ acting on one endogenous variable $X_k$. 
    We only assume we can sample from unintervened and interventional distributions of $\Lcal$.
\par
(2) A class of high-level SCMs $\{\Hcal_{\gammab}\}_{\gamma \in \Gamma}$ with (n+1) endogenous variables  $\{Y, {Z}_1,\dots,{Z}_n\}$ and associated exogenous variables $\{R_k\}_{k= 0..n}$, equipped with a factorized distribution $P(\Rb)=\prod P_{R_k}$.
    A set of high-level shift interventions parametrized by vector $\jb{\in} \Jcal$,  with each component $j_k$ affecting a single node $Z_k$. In contrast to the (fixed) low-level model, the high-level model parameters $\gammab$ are learned.
\par
These two levels are linked by a constructive transformation with two deterministic surjective maps $\tau$ and $\omega$ from low- to high-level endogenous variables and interventions, respectively, which decompose as
\begin{align}
    \tau & = (\tau_0,\tau_1,\tau_2,\dots,\tau_n)\mbox{ with }\tau_k:x\mapsto \bar{\tau}_k(x_{\pi(k)})\\
                \omega & = (\omega_0,\omega_1,\omega_2,\dots,\omega_n)\mbox{ with }\omega_k:\ib\mapsto \bar{\omega}_k(\ib_{\pi(k)})
\end{align}
where $\pi$ is a so-called alignment function from $[0\,..\,n]$ to non-overlapping subsets of $[1\,..\,N]$.
Importantly, $\tau_0$ (and thus $(\bar{\tau}_0,\pi(0))$) are assumed fixed and known.
Additionally, $\omega_0$ is assumed to be a trivial constant map $\ib \to 0$, to ensure that the high-level target variable cannot be directly intervened upon, as we want to explain the changes in $Y$ exclusively through changes of its high-level causes. 
\par
A high-level model involves the following mechanisms, which need to be learned:
(1) The marginal distribution of each high-level cause  $P^{(\jb)}({Z}_k)$ in all high-level interventional settings $\jb$. 
(2) The mechanism $P(Y|\Zb)$ mapping high-level causes to $Y$, comprised of the distribution of the exogenous variable $R_0$ and the structural equation
\[
(Z_1,...,Z_n, R_0) \mapsto  f_{\gammab} (Z_1,...,Z_n, R_0) \eqqcolon Y\, .
\]

\subsection{Causal consistency loss}

It is not always possible to achieve an exact transformation that guarantees consistency of low- and high-level models for almost all interventions.
As a consequence, we allow for the consistency between models to be approximate. 
To ensure that this approximation is as accurate as possible, we minimize the expected KL divergence between the pushforward by the transformation $\tau$ of the low-level interventional distributions that we denote $
\widehat{P}_{\tau}^{(\ib)}(Y,\Zb) = \tau_{\#}[P_{\Lcal}^{(\ib)}(\Xb)]$, and the corresponding interventional distribution of the high-level model $P^{(\omega(\ib))}$, leading to the consistency loss
\begin{equation}\label{eq:lcons}
    \Lcal_\mathrm{cons} =	 \EE_{\ib\sim P(\ib)} \left[ \mathrm{KL}\left( \widehat{P}_{\tau}^{(\ib)}(Y,\Zb)\|P^{(\omega(\ib))}(Y,{\Zb})\right)\right]\,.
\end{equation}
Other losses have been previously suggested to enforce consistency.
\citet{beckers2020approximate} propose to take a maximum over interventions, whereas we take the expectation in our loss, thus focusing the CMR on the average performance rather than the worst case.
\cite{rischel2021compositional} and \cite{zennaro2023jointly} use the Jensen-Shannon (JS) divergence in the context of finite models.
Instead, we choose the KL divergence because, contrary to JS, it leads to a tractable expression under Gaussian assumptions. 
Moreover, the proposed consistency loss~\eqref{eq:lcons} has the following properties. 

\begin{restatable}[Consistency loss]{proposition}{consloss}\label{prop:consloss}
The consistency loss is positive, invariant to invertible reparametrizations (see Def.~\ref{def:reparam}), and vanishes if and only if the transformation is exact for almost all interventions. It decomposes as 
\begin{multline}
\label{eq:CMdec}
   \Lcal_\mathrm{cons}\!
    =\!\EE_{i\sim P(\ib)}\!\! \left[
    \mathrm{KL}\left(
       \widehat{P}_{\tau}^{(\ib)}\left(\Zb\right)||{P}^{(\omega(\ib))}\left(\Zb\right)
    \right)\right. \\\!
    \left.+\EE_{\zb\sim \widehat{P}_{\tau}^{(\ib)}\left(\Zb\right)}\!\!\left[
        \mathrm{KL}\left(
            \widehat{P}_{\tau}^{(\ib)}\left({Y}|\zb)\right)||{P}^{(0)}\left({Y}|\zb\right)
        \right)
    \right]\right]\,,
\end{multline}
and is an upper bound of the \emph{causal relevance loss}
\begin{equation}\label{eq:Lrel}
    \Lcal_\mathrm{rel} \!=\! \EE_{\ib\sim P(\ib)}\!\left[\mathrm{KL}\left(        \widehat{P}^{(\ib)}\left(Y\right)||{P}^{(\omega(\ib))}\left(Y\right)\right)\right]\!\!\leq\!\! \Lcal_\mathrm{cons} \,. 
\end{equation}
\end{restatable}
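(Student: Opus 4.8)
The plan is to treat the four claimed properties in increasing order of difficulty, relying throughout on standard properties of the Kullback--Leibler divergence. Positivity is immediate: $\mathrm{KL}(\cdot\|\cdot)\geq 0$ for any pair of distributions, so the integrand in \eqref{eq:lcons} is non-negative and hence so is its expectation under $P(\ib)$. For the vanishing condition I would use that $\mathrm{KL}(P\|Q)=0$ iff $P=Q$ almost everywhere. Since $\Lcal_\mathrm{cons}$ is the expectation of a non-negative integrand, it vanishes iff that integrand is zero for $P(\ib)$-almost every $\ib$, i.e. iff $\widehat{P}_{\tau}^{(\ib)}(Y,\Zb)=P^{(\omega(\ib))}(Y,\Zb)$ for almost all $\ib$, which is exactly the definition of exactness of $\tau$ (with intervention map $\omega$) for almost all interventions.

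The decomposition \eqref{eq:CMdec} is the technical core. First I would apply the chain rule for KL divergence to the factorization $P(Y,\Zb)=P(\Zb)\,P(Y|\Zb)$, giving for each fixed $\ib$
\begin{multline*}
\mathrm{KL}\!\left(\widehat{P}_{\tau}^{(\ib)}(Y,\Zb)\,\|\,P^{(\omega(\ib))}(Y,\Zb)\right)\\
=\mathrm{KL}\!\left(\widehat{P}_{\tau}^{(\ib)}(\Zb)\,\|\,P^{(\omega(\ib))}(\Zb)\right)\\
+\EE_{\zb\sim\widehat{P}_{\tau}^{(\ib)}(\Zb)}\!\left[\mathrm{KL}\!\left(\widehat{P}_{\tau}^{(\ib)}(Y|\zb)\,\|\,P^{(\omega(\ib))}(Y|\zb)\right)\right].
\end{multline*}
The remaining step is to justify replacing $P^{(\omega(\ib))}(Y|\zb)$ by the unintervened mechanism $P^{(0)}(Y|\zb)$. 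This is where the causal structure of the high-level model enters: since $\omega_0$ is the trivial map $\ib\mapsto 0$, the target $Y$ is never directly intervened upon, and each high-level intervention acts only on the causes $\Zb$. By the modularity of the high-level SCM, an intervention on $\Zb$ leaves the structural equation $Y=f_{\gammab}(Z_1,\dots,Z_n,R_0)$ and the law of $R_0$ unchanged, so the conditional mechanism is invariant, $P^{(\omega(\ib))}(Y|\zb)=P^{(0)}(Y|\zb)$. Substituting and taking the expectation over $\ib$ yields \eqref{eq:CMdec}.

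The upper bound $\Lcal_\mathrm{rel}\leq\Lcal_\mathrm{cons}$ follows from the monotonicity of KL divergence under marginalization (the data-processing inequality applied to the coarse-graining $(Y,\Zb)\mapsto Y$): for every $\ib$,
\[
\mathrm{KL}\!\left(\widehat{P}^{(\ib)}(Y)\,\|\,P^{(\omega(\ib))}(Y)\right)\leq\mathrm{KL}\!\left(\widehat{P}_{\tau}^{(\ib)}(Y,\Zb)\,\|\,P^{(\omega(\ib))}(Y,\Zb)\right),
\]
where $\widehat{P}^{(\ib)}(Y)$ is the $Y$-marginal of $\widehat{P}_{\tau}^{(\ib)}(Y,\Zb)$. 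Taking expectations over $\ib$ gives the claim. Equivalently, the bound can be read off directly from \eqref{eq:CMdec}, since both the $\Zb$-term and the conditional term are non-negative.

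Finally, for invariance under the invertible reparametrizations of Def.~\ref{def:reparam}, I would use that KL divergence is invariant under any bijective measurable transformation applied simultaneously to both of its arguments. A reparametrization pushes both $\widehat{P}_{\tau}^{(\ib)}$ and $P^{(\omega(\ib))}$ forward by the same invertible map, so each pointwise KL term, and hence $\Lcal_\mathrm{cons}$, is unchanged. The main obstacle I anticipate is the invariance step in the decomposition: one must verify carefully, from the definition of the high-level SCM and the triviality of $\omega_0$, that interventions genuinely leave $P(Y|\Zb)$ intact---this is the only place the argument uses the causal (rather than merely probabilistic) semantics of the model, and it is what makes the replacement $P^{(\omega(\ib))}(Y|\zb)=P^{(0)}(Y|\zb)$ legitimate. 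Some care is also needed with measure-theoretic regularity (absolute continuity so that the KL terms and conditional densities are well defined) and with the $P(\ib)$-almost-everywhere qualifiers.
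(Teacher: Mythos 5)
Your proof is correct and follows essentially the same route as the paper's: positivity and the vanishing condition from basic properties of KL, invariance from KL's invariance under invertible pushforwards applied to both arguments, the decomposition via the KL chain rule combined with the invariance of the high-level mechanism $P^{(0)}(Y|\zb)$ under interventions (trivial $\omega_0$, root-node causes), and the bound $\Lcal_\mathrm{rel}\leq\Lcal_\mathrm{cons}$ via monotonicity of KL under marginalization --- the paper proves this last inequality by writing the chain rule conditioning on $Y$ instead of $\Zb$ and discarding the non-negative conditional term, which is just an inline proof of the data-processing inequality you invoke. One caveat: your parenthetical claim that the bound can be ``read off directly'' from \eqref{eq:CMdec} is incorrect, since $\Lcal_\mathrm{rel}$ involves the $Y$-marginal KL, which is not one of the two terms in that decomposition (the second term is a $Y|\Zb$ divergence averaged under $\widehat{P}_{\tau}^{(\ib)}(\Zb)$, not a $Y$-marginal divergence), so non-negativity of those terms does not yield the bound and you genuinely need the DPI or the paper's $Y$-conditioned chain rule.
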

\par
Reparametrization invariance (see Def.~\ref{def:reparam}) refers to transformations of the pairs $(\tau, f_{\gammab})$ that leave the composition $f_{\gammab}\circ \tau$ invariant. 
In the $n=1$ linear setting (see Sec.~\ref{sub:linear_reduction_with_shift_interventions}), this corresponds to invariance by multiplicative rescaling.
This guarantees that equivalent high-level causal descriptions are treated equally by the loss. 
\par
\looseness -1
We call Eq.~\eqref{eq:CMdec} a \textit{Cause-Mechanism Decomposition} because the first term quantifies the \textit{cause consistency} and the second term can be thought of as the \textit{mechanism consistency}. 
This latter term assesses the similarity between the outputs of the learned high-level mechanism $P^{(0)}(Y|z)$ and the corresponding conditional distribution computed by push-forward of the low-level variables $\widehat{P}_{\tau}^{(\ib)}(Y|z)$. 
Since we prevent the high-level mechanism from being intervened on, only its unintervened conditional appears in the expression. 
\par
Lastly, the causal relevance loss $\Lcal_{rel}$ assesses whether the variations of the target $Y$ due to low-level interventions are well-captured by high-level interventions, on average over the prior $P(\ib)$.
Its upper bounding by $\Lcal_\mathrm{cons}$ ensures that by optimizing for consistency, we also indirectly promote effective ``explanations'' of the variations in the target density resulting from low-level interventions. 
We can thus choose $P(\ib)$ to make the most relevant interventions more likely according to domain knowledge, such that optimizing the loss will steer towards a solution capturing the most domain-relevant variations of the target. 

\subsection{Linear reduction with shift interventions} 
\label{sub:linear_reduction_with_shift_interventions}
We further constrain the setting to be able to study the solution minimizing $\Lcal_\mathrm{cons}$ analytically and get insights into the properties of TCR. 
 
\paragraph{Notation.}
When a vector, say $\taub_k$, is associated to a high-level SCM component $k$ of a constructive transformation with alignment $\pi$, $\bar{\taub}_k$ indicates the restriction of $\taub_k$ to components in $\pi(k)$.
The number of elements in a set $S$ is $\#S$.

\paragraph{Tau map.}
To maximize interpretability, we assume a linear $\tau$-map, represented as a vector $\taub$ such that:
\[
\Xb \mapsto \begin{bmatrix}
Y\\
{\Zb}
\end{bmatrix} =
\begin{bmatrix}
\taub_0^\top\\\vdots,
\\
\taub_n^\top
\end{bmatrix} \Xb
=
\begin{bmatrix}
\bar{\taub}_0^\top \Xb_{\pi(0)} \,,& \dots
 \bar{\taub}_n^\top  \Xb_{\pi(n)}
\end{bmatrix}^\top .
\]

\paragraph{Omega map.}

We focus on \textit{shift interventions} and map the vector $\boldsymbol{i}$ of low-level interventions on the nodes in $\pi(k)$ to a scalar shift intervention on the mechanism of each $Z_k$.
We assume each map $\omega_k$ to be linear with vector $\omegab_k$ such that 
\[
\omega_k(\ib) = \omegab_k^\top \boldsymbol{i}=\bar{\omegab}_k^\top  \boldsymbol{i}_{\pi(k)}\, .
\]
Because high-level causes are root nodes, intervening amounts to shifting the marginal distribution
from $P^{(0)}(Z_k)$ to  $P^{(\omega_k(\ib))}(Z_k)=P^{(0)}(Z_k-\omega_k(\ib))$. 

\paragraph{Choice of alignment $\pi$.}
There are potential degrees of freedom for $\pi$, and users may want to incorporate domain knowledge as well as interpretability constraints to reduce the variables included in $\cup_{k\neq 0} \pi(k)$.
In practice, we learn the distribution of the low-level variables among the $\pi(k)$ using regularization (see Sec.~\ref{sec:linear_tcr_algorithm}).

\paragraph{High-level mechanism.}
We use an interpretable affine high-level causal mechanism $f_\gamma$, such that
\begin{equation}\label{eq:linhighlevelmech}
    \textstyle
    Y\coloneqq \sum_k \alpha_k Z_k +R_0+\beta\,,\quad \alpha_1,\dots,\alpha_n,\beta \in \RR \, .
\end{equation}

\paragraph{Choice of prior $P(\ib)$.}
The solutions minimizing the loss of Eq.~(\ref{eq:lcons}), may depend on the choice of the prior $P(\ib)$, and in particular on which variables are actually intervened on. 
Let ${\Omega}$ denote the subset of indices of low-level variables that are intervened on with non-zero probability. 
The components of $\ib$ whose index does not belong to $\Omega$ thus take value $i=0$ with probability one.
We provide identifiability guaranties under two kinds of assumptions. 

\begin{assumption}\label{assum:prior}
 $P(\ib_{{\Omega}})$ has a density with respect to the Lebesgue measure, with support covering a neighborhood of zero (\textit{i.e.}\ the unintervened case).
\end{assumption}
\begin{assumption}\label{assum:prior2} The unintervened setting $\ib_\Omega=\boldsymbol{0}$ occurs with non-zero probability. Additionally there are at least $\#\Omega$ distinct interventions happening with non-zero probability, corresponding to a family of values of the vector of $\ib_{\Omega}$ with full rank $\#\Omega$.  
\end{assumption}
While Assum.~\ref{assum:prior} depicts a practical setting where interventions are drawn from prior densities that reflect the prior knowledge on how likely those are, Assum.~\ref{assum:prior2} allows addressing a classical question in causal representation learning: \textit{How many distinct interventions are needed to learn the representation?}

\subsection{Identifiability results}\label{sec:ident}
If we assume the low-level model is linear Gaussian of the form $ \Xb_{\Omega}\to \Xb_{\pi(0)}$, we can show the existence and uniqueness of the solution.

\begin{restatable}{proposition}{anasol} \label{prop:analytic_solution}
Assume
 the low-level SCM follows 
\[
\Xb\coloneqq A \Xb+\Ub +\ib \,,\,\,\ib\sim P(\ib)\,,\quad U_k \sim \Ncal(\mu_k,\sigma_k^2)\,,\sigma_k>0\,,
\]
such that $\Xb$ and $A$  take the block forms
\[
\Xb = \begin{bmatrix}
\Xb_{\pi(0)}\\ \Xb_{\Omega}
\end{bmatrix}\,,\quad 
A=
\left[
\begin{matrix}
A_{00}\, & A_{0\Omega}\\
\boldsymbol{0}\, & A_{\Omega \Omega}
\end{matrix}
\right]\,%
.
\]
Given an arbitrary choice of linear scalar target of the form $Y=\tau_0^\top \Xb=\bar{\tau}_0^\top \Xb_{\pi(0)}$ and under Assum.~\ref{assum:prior} or Assum.~\ref{assum:prior2}, %
there is a unique linear 1-cause TCR (up to a multiplicative constant) satisfying $\Lcal_\mathrm{cons}=0$. It is given by 
\begin{align}
\pi(1) =&\, \Omega\,,\\
\bar{\taub}_1 =&\, A_{0\Omega}^\top (I_{\#\pi(0)}-A_{00})^{-\top}\bar{\taub}_0 \label{eqn:analyitcal_tau}\,,\\
\text{and} \quad \bar{\omegab}_1 =&\,  (I_{\#\Omega}-A_{\Omega \Omega})^{-\top}\bar{\taub}_1\,. \label{eqn:analyitcal_omega}
\end{align}
Moreover, let $n_{max}$ be the maximum number $n$ such that a linear $n$-cause TCR can achieve $\Lcal_\mathrm{cons}=0$.
If there are no cancellations%
\footnote{Causal pathways cancel if the linear coefficients quantifying the influence of a node on $Y$ along different directed paths of the low-level SCM sum to zero. Assuming no cancellations is akin to assuming no faithfulness violations and generically satisfied \citep[Theorem 3.2]{sprites2001causation}.}
among causal pathways from each node in $\mbox{supp}(\bar{\omegab}_1)$ of Eq.~(\ref{eqn:analyitcal_omega}) towards $Y$, then the $n_{max}$-cause TCR is unique up to rescaling and permutation of the causes. 
\end{restatable}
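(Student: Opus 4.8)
The plan is to solve the linear-Gaussian low-level model in closed form, reduce the condition $\Lcal_\mathrm{cons}=0$ to a system of \emph{mean-matching} equations (the covariances being invariant under shift interventions), and read off the claimed formulas as the unique solution; the multi-cause uniqueness then reduces to a support/no-cancellation argument. First I would invert $I-A$ using its block-triangular structure to write $\Xb_\Omega=(I-A_{\Omega\Omega})^{-1}(\Ub_\Omega+\ib_\Omega)$ and $\Xb_{\pi(0)}$ as an affine-Gaussian function of $\Ub$ and $\ib_\Omega$ (using $\ib_{\pi(0)}=\mathbf{0}$, since only $\Omega$ is intervened). Consequently $Y=\bar{\taub}_0^\top\Xb_{\pi(0)}$ and $Z_1=\bar{\taub}_1^\top\Xb_\Omega$ are jointly Gaussian, and a shift $\ib_\Omega$ merely translates their means by $\mathbf{c}^\top(I-A_{\Omega\Omega})^{-1}\ib_\Omega$ and $\bar{\taub}_1^\top(I-A_{\Omega\Omega})^{-1}\ib_\Omega$ respectively, where $\mathbf{c}\coloneqq A_{0\Omega}^\top(I-A_{00})^{-\top}\bar{\taub}_0$ is the total linear effect of $\Xb_\Omega$ on $Y$, while the covariance of $(Y,Z_1)$ is unaffected.

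Using the Cause-Mechanism decomposition~\eqref{eq:CMdec}, $\Lcal_\mathrm{cons}=0$ iff for almost every $\ib$ both the cause law $\widehat{P}_\tau^{(\ib)}(Z_1)$ and the mechanism $\widehat{P}_\tau^{(\ib)}(Y\mid z)$ coincide with their high-level counterparts. Since all laws are Gaussian and interventions only shift means, the variance parts are intervention-independent and are satisfied by choosing the free high-level quantities ($P^{(0)}(Z_1)$, $\mathrm{Var}(R_0)$, $\beta$, $\EE R_0$) at the unintervened setting, which lies in the support under Assum.~\ref{assum:prior} or~\ref{assum:prior2}. The binding constraints are the mean parts, required to hold for all $\ib_\Omega$ in the support; since these assumptions make $\ib_\Omega$ span $\RR^{\#\Omega}$, each vector identity must hold exactly. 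Cause consistency equates the $Z_1$-shift with $\bar{\omegab}_1^\top\ib_\Omega$, giving $\bar{\omegab}_1=(I-A_{\Omega\Omega})^{-\top}\bar{\taub}_1$, which is~\eqref{eqn:analyitcal_omega}. Mechanism consistency is subtler: the Gaussian conditional $Y\mid Z_1$ has an intervention-independent slope $s=\mathrm{Cov}(Y,Z_1)/\mathrm{Var}(Z_1)$ and variance, so invariance across $\ib$ reduces to invariance of its \emph{intercept}, i.e.\ $\Delta\EE[Y]=s\,\Delta\EE[Z_1]$ for all $\ib_\Omega$; spanning yields $\mathbf{c}=s\,\bar{\taub}_1$, hence $\bar{\taub}_1\parallel\mathbf{c}$, which is~\eqref{eqn:analyitcal_tau} and pins $\bar{\taub}_1$ down up to the scalar $s=\alpha_1$, exactly the multiplicative reparametrization freedom of Prop.~\ref{prop:consloss}. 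Substituting $\bar{\taub}_1=\mathbf{c}$ gives $s=1$, so the formulas are self-consistent and achieve $\Lcal_\mathrm{cons}=0$, establishing existence and uniqueness up to rescaling. To force $\pi(1)=\Omega$, I would note that the constructive map $\omega_1$ depends only on $\ib_{\pi(1)}$, so any intervened $j\in\Omega\setminus\pi(1)$ whose effect reaches $\Xb_{\pi(1)}$ or $Y$ would shift the pushforward but not the high-level law, breaking consistency; absent cancellations every such $j$ has nonzero effect, so $\pi(1)=\Omega$.

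For the $n_{max}$-cause statement I would extend the same machinery: independence of the high-level roots $Z_k$ additionally requires the pushforward causes to be mutually uncorrelated and each $Z_k$-shift to depend only on $\ib_{\pi(k)}$, while $Y$-mechanism invariance block-decomposes the effect vector $\mathbf{d}=(I-A_{\Omega\Omega})^{-\top}\mathbf{c}$ over the partition $\{\pi(k)\}$. \textbf{The main obstacle} is the combinatorial uniqueness: showing that the finest admissible partition is unique up to relabeling and that each $\bar{\taub}_k$ is then determined up to scale. This is where the no-cancellation (faithfulness-type) assumption is essential, as it prevents the path coefficients towards $Y$ from degenerating, so that two distinct maximal-resolution partitions cannot both realize $\Lcal_\mathrm{cons}=0$ and no finer split is possible without forcing a vanishing combination of path coefficients. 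The Gaussian algebra is routine; this uniqueness of the finest consistent partition is the crux.
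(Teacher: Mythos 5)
Your 1-cause argument is correct and essentially follows the paper's route: blockwise inversion of $I-A$, the cause--mechanism decomposition \eqref{eq:CMdec}, the observation that shift interventions move only means (so the variance constraints merely fix the free high-level parameters), and a spanning family of interventions (Assum.~\ref{assum:prior} or~\ref{assum:prior2}) turning almost-everywhere equality into the two linear identities \eqref{eqn:analyitcal_omega} and \eqref{eqn:analyitcal_tau}. Your route to \eqref{eqn:analyitcal_tau} --- the conditional $\widehat{P}_{\tau}^{(\ib)}(Y|z)$ has intervention-independent slope $s$ and variance, so exactness reduces to invariance of its intercept, forcing $\mathbf{c}=s\,\bar{\taub}_1$ with $\mathbf{c}=A_{0\Omega}^\top(I-A_{00})^{-\top}\bar{\taub}_0$ --- is a slightly tidier packaging than the paper's, which first matches the $Y$-marginals (using $\Lcal_\mathrm{rel}\le\Lcal_\mathrm{cons}$) as a necessary condition and then separately verifies sufficiency through the conditional law; the two are equivalent. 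One minor misplacement: you invoke no-cancellations to force $\pi(1)=\Omega$, but the paper needs no such assumption at this stage --- it simply zero-pads $\bar{\taub}_1,\bar{\omegab}_1$ on $\Omega\setminus\pi(1)$ as a normalization, after which the forced identities determine the vectors uniquely up to scale.

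The genuine gap is the $n_{max}$-cause uniqueness claim, which you explicitly defer (``this uniqueness of the finest consistent partition is the crux''). Asserting that no-cancellations prevents two distinct maximal-resolution partitions from both being exact restates the conclusion; it is not an argument, and faithfulness alone does not deliver it. The paper closes this with three concrete steps, all absent from your sketch. First, an aggregation lemma: for any exact $n$-cause TCR with mechanism coefficients $\alpha_k$, the aggregate $\tilde{Z}=\sum_k\alpha_k Z_k$ (i.e., $\tilde{\taub}=\sum_k\alpha_k\taub_k$ and $\tilde{\omegab}=\sum_k\alpha_k\omegab_k$) is itself an exact 1-cause TCR, hence coincides with the Part-1 solution up to a constant; this anchors every candidate $\bar{\taub}_k,\bar{\omegab}_k$ on its own support. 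Second, exactness of each cause separately forces $\bar{\taub}_k^\top\bigl[(I-A_{\Omega\Omega})^{-1}\bigr]_{\pi(k),\pi(l)}=0$ for all $l\neq k$, and no-cancellation upgrades these scalar identities to vanishing of the corresponding blocks of $(I-A_{\Omega\Omega})^{-1}$ --- this is the only place the assumption is used. Third, and crucially, the maximality of $n_{max}$ enters through a refinement contradiction: if a second exact $n_{max}$-cause solution had an alignment differing from $\pi$ by more than a permutation, the overlap of supports together with the block-vanishing structure would let one split some $\pi(k)$ into two sub-blocks on which $(I-A_{\Omega\Omega})^{-1}$ is block diagonal; by independence of the exogenous noise, these sub-blocks assemble into an exact $(n_{max}+1)$-cause TCR, contradicting the definition of $n_{max}$. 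It is this construction, not the no-cancellation assumption by itself, that rules out distinct partitions; without it and the aggregation lemma, your Part 2 states what must be shown rather than proving it.
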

This result provides guaranties for having a unique ground-truth solution in case exact transformations can be achieved.
The main assumption is the absence of feedback influences from the target set $\pi(0)$ to candidate causes.
However, cycles and confounding are allowed in the low-level model, contrary to the learned high-level model.
The 1-cause solution is easiest to obtain.
The study of simple SCMs (App.~\ref{app:singtar} and App.~\ref{app:linchain}) provides some insights on the form of the analytical solution.
Additional results show that we lose identifiability of the TCR if we drop the assumption that not all variables in $\pi(1)$ are intervened on (see App.~\ref{app:nointernoident}). 
The $n$-cause solution is essentially a partition of the 1-cause solution that enforces independence between them.
\par
Under the same model assumptions, the resulting constructive transformation can be associated with a constructive causal abstraction, as shown in Proposition~\ref{prop:abs_solution}.
This corresponds to a particular case of low soft abstraction introduced by \citet[Def.~9]{massidda2023causal}.
\par
\begin{restatable}[Linear chain]{example}{linear_chain} \label{ex:linear_chain}
To illustrate the solutions in Prop.~\ref{prop:analytic_solution}, we consider a linear chain
\begin{equation*}
    \underbrace{X_1 \rightarrow X_2 \rightarrow X_3}_{X_\Omega} \rightarrow \underbrace{X_4}_{X_{\pi(0)}}
\end{equation*}
with adjacency $A_{ij} = \{ 1\ \mathrm{for}\ j {=} i{+}1;\ 0 \ \mathrm{else}\}$ and target $Y=X_4$, such that $\bar{\taub}_0 = I_1$.
The 1-cause solution (up to a multiplicative constant) achieving $\Lcal_\mathrm{cons}=0$ is
\begin{equation}
    \bar{\taub}_1 = \begin{pmatrix}
        0 & 0 & 1
    \end{pmatrix}^\top
    \quad \text{and} \quad
    \bar{\omegab}_1 = \begin{pmatrix}
        1 & 1 & 1
    \end{pmatrix}^\top \, .
\end{equation}
$\bar{\taub}_0$ puts all its weight on the direct parent of target $X_4$ because it mediates all causal influences. 
In contrast, $\bar{\omegab}_1$ puts weight on all variables in $\Omega$ because interventions on any of them influence $X_4$
\end{restatable}

\section{LINEAR TCR ALGORITHM}
\label{sec:linear_tcr_algorithm}

In this section, we introduce an algorithm to learn a linear targeted causal reduction with shift interventions. 
\makeatletter
\def\algbackskip{\hskip-\ALG@thistlm}
\makeatother

\begin{algorithm}%
\caption{Linear TCR (LTCR)%
}\label{alg:LCPR}
\textbf{Input} $\lambda$: learning rate, $P(\ib)$: intervention prior, \textit{Simulate}$(\thetab,\ib,n_\mathrm{sim})$: function returning $n_\mathrm{sim}$ paths, $N_\mathrm{ite}$: No.\ epochs, $B$, $B_\ib$: simulation/intervention batch size. \\
    \textbf{Initialize} $\taub_1,\omegab, \gammab$
\begin{algorithmic}
\FOR{ $m = 1 .. N_\mathrm{ite}$}
    \STATE $X,Y\gets []$
    \FOR{ $l = 1 .. B_\ib$}
        \STATE $\ib_l \gets Sample(P(\ib))$
        \STATE $X_l=(\xb^1,.., \xb^B)$;  $Y_l \gets Simulate(\thetab,\ib_l,B) $
        \STATE $X\gets [X[:],X_l]$;  $Y\gets [Y[:],Y_l]$; $I\gets [I[:],\ib_l]$
    \ENDFOR 
\STATE $L_\mathrm{tot}\gets ComputeLoss(X,Y,I,\taub_1,\omega_1,\gammab)$
\STATE $\nabla_{\gammab}, \nabla_{\taub}\gets ComputeLossGradient(L_\mathrm{tot})$
\STATE $(\gammab,\taub_1,\omegab_1) \gets (\gammab-\lambda \nabla_{\gammab},\taub_1-\lambda \nabla_{\taub_1},\omegab_1-\lambda \nabla_{\omegab_1})$
\ENDFOR
\end{algorithmic}
\textbf{Output} Estimated parameters $(\taub_1, \omegab_1, \gammab)$.
\end{algorithm}

\begin{figure}[!b]
    \centering
    \begin{subfigure}[t]{\linewidth}
        \includegraphics[width=\linewidth]{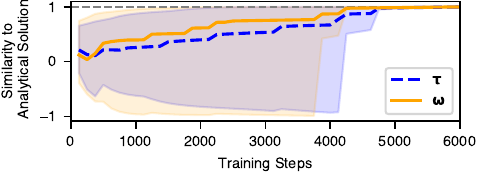}
        \caption{
            \small 
            \textbf{Comparison between learned and analytical solutions 1-cause TCR %
            }
            Average cosine similarity to the analytical solutions over 20 runs.
            Each run corresponds to one draw of adjacency matrix parameters.
            The shaded areas show the range between the minimum and maximum values.
            The dashed gray line corresponds to perfect similarity.
        }
        \label{fig:linear_loss}
    \end{subfigure}
    \begin{subfigure}[t]{\linewidth}
        \vspace{\baselineskip}
        \includegraphics[width=\linewidth]{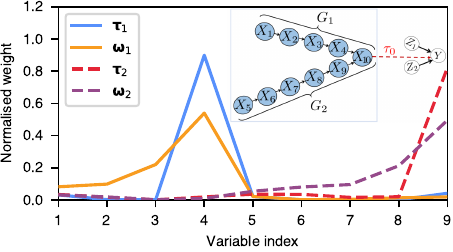}
        \caption{\small \looseness-1
            \textbf{Two-branch linear model.}
            Learned $\tau$- and $\omega$ parameters for a TCR with two high-level variables for a linear Gaussian low-level model with $N{=}10$.
            The solid lines show the parameters for $Z_1$ and the dashed lines those for $Z_2$.
            The parameters are averaged over 20 runs where each run corresponds to one draw of adjacency matrix parameters.
            The inlay shows the causal structure of the low-level model, where two groups of variables $G_1$ and $G_2$ form two independent chains causing the target $X_{10}{=}Y$.
        }
        \label{fig:two_branch}
    \end{subfigure}
    \caption{\small \looseness-1
            \textbf{Toy example experiments.}
            }
    \label{fig:toy_experiments}
\end{figure}

\paragraph{Gaussian approximation of consistency loss.}
Since the KL divergence is challenging to compute in non-parametric settings, we make a Gaussian assumption on the densities. 
This allows us to obtain an analytic expression for the loss based on second order statistics (see expression in App.~\ref{app:gaussloss}).

\begin{figure*}[htb]
\centering
    \includegraphics[width=\linewidth]{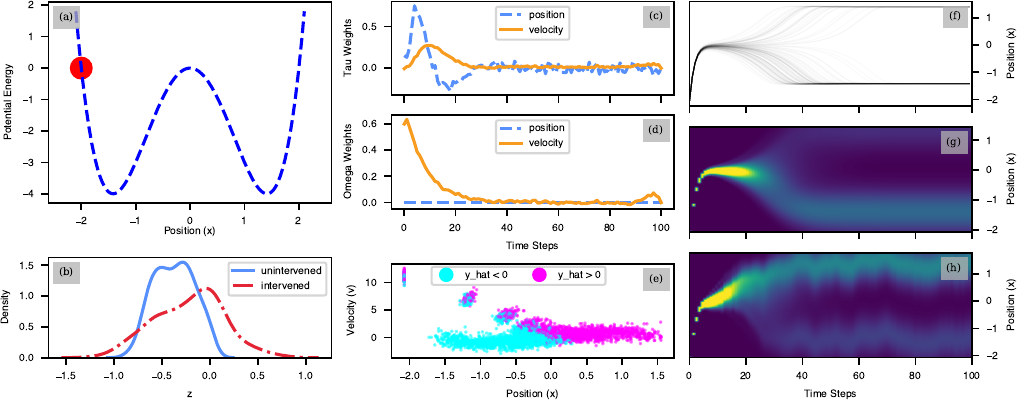}
	\caption{\small
	\textbf{Double well experiment.}
    (a) Experimental setup with a ball moving in a double well potential subject to linear friction.
    (b) Pushforward density of the high-level cause for the two settings: one where no intervention is applied (unintervened), and the other with an applied shift intervention.
    (c, d) Learned parameters, $\tau$ and $\omega$, respectively.
    The learned high-level mechanism is $f(Z_1) \approx 1.37 Z_1 + 0.45$
    (e) Samples in phase space (position vs.\ velocity) for the first 20 time points.
    The color indicates whether the high-level model predicts the ball to end up in the right (pink) or right well (turquoise).
    (f, g) Samples from the unintervened setting and the corresponding estimated density. 
    (h) Estimated density for one intervened setting.
	\label{fig:double_well}
 }
\end{figure*}

\paragraph{Overlap loss.}  \looseness-1
To ensure differentiability of the reduction maps we do not implement the alignment $\pi$ explicitly, but encourage non-overlapping reduction maps via the regularizer
\begin{equation}
    \Lcal_\mathrm{ov}=\sum_{k<l}\left(
    \left\langle \frac{|\taub_k|}{\|\taub_k\|},\frac{|\taub_l|}{\|\taub_l\|}\right\rangle
    + \left\langle \frac{|\omegab_k|}{\|\omegab_k\|},\frac{|\omegab_l|}{\|\omegab_l\|}\right\rangle
    \right),
    \label{eqn:overlap}
\end{equation}
where $|\cdot|$ is the element-wise absolute value.

\paragraph{Balancing loss.}
Minimizing the Gaussian approximation of the consistency loss together with overlap regularization~\eqref{eqn:overlap} there is nothing preventing the solution from attributing all non-zero weights in the $\tau$- and $\omega$ maps to one high-level variable while ignoring all others.
In order to prevent such a collapse, we minimize stark differences between the high-level variables through the balancing term
\begin{equation}
    \Lcal_\mathrm{bal}=\Biggl( \
        \frac{\sqrt{\sum_k\| \alpha_k \taub_k \|_2^2}}{\sum_k\| \alpha_k \taub_k \|_2}
        + \frac{\sqrt{\sum_k\| \alpha_k \omegab_k \|_2^2}}{\sum_k\| \alpha_k \omegab_k \|_2}
    \ \Biggr) \, ,
    \label{eqn:balancing}
\end{equation}
where $\alpha_k$ is the coefficient in the linear high-level mechanism corresponding to variable $Z_k$.

Gathering the losses, we get the total objective
\begin{equation}
\underset{\gammab,\taub,\omegab}{\mbox{minimize }} \Lcal_\mathrm{tot} = \Lcal_\mathrm{cons}+\eta_\mathrm{ov}\Lcal_\mathrm{ov} +\eta_\mathrm{bal}\Lcal_\mathrm{bal}\,.
\label{eqn:total_loss}
\end{equation}
The learning procedure is described in Algorithm~\ref{alg:LCPR}.

\section{EXPERIMENTS}
\label{sec:experiments}

\subsection{Toy examples: linear Gaussian low-level causal models}
\label{sub:exp_toy_examples}

\begin{figure*}[htb]
\centering
    \includegraphics[width=\linewidth]{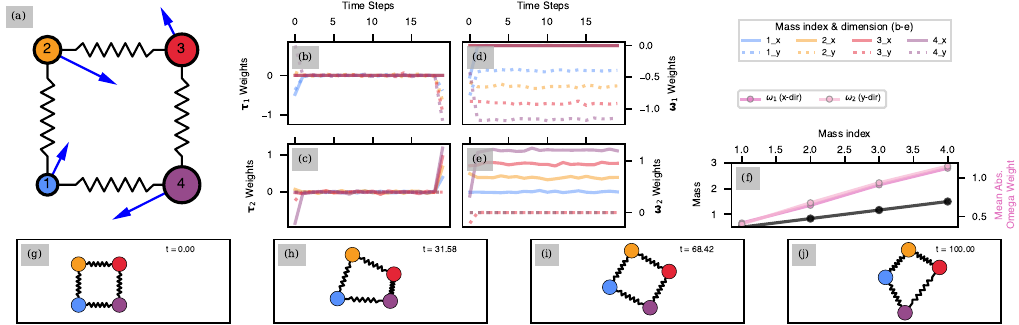}
	\caption{\small
	\textbf{Spring-mass system experiment.}
        (a) Simulated system of four point masses with different weights connected by springs and with random initial velocity (blue arrows).
        The target of the simulation is the center of mass speed in $(1, 1)$-direction.
        (b-e) Learned $\taub$- and $\omegab$-weights corresponding to velocity components in $x$- and $y$-direction for a TCR with two high-level variables.
        The learned high-level mechanism is $f(\Zb) \approx -0.226 Z_1 + 0.220 Z_2 $.
        (f) Comparison between masses and learned omega weights. For the first high-level variable the mean omega weights corresponding to the $x$-direction are shown and for the second variable those for the $y$-direction.
        (g-j) Example trajectory for an unintervened system.
        }
    \label{fig:spring_mass}
\end{figure*}

\paragraph{Linear low-level causal models.}
We first test TCR by sampling from a linear Gaussian low-level model, rather than a simulation.
We construct linear models of the form shown in Prop.~\ref{prop:analytic_solution} by drawing the non-zero entries in the adjacency matrix uniformly from the interval $[-1, 1]$.
We learn a targeted causal reduction with two high-level variables: the target $Y$ and its single cause $Z$.
Fig.~\ref{fig:linear_loss} compares the learned $\taub_1$ and $\omegab_1$ to the analytical solutions \eqref{eqn:analyitcal_tau} and \eqref{eqn:analyitcal_omega}.
We observe that, for these low-level models meeting the linear Gaussian assumption in Section~\ref{sec:theoretical_analysis}, the learning algorithm converges to the global optimum.

\paragraph{Two-branch model.}  \looseness-1
To investigate the behavior of TCR with multiple high-level variables, we consider a low-level model with two branch causal structure (Fig.~\ref{fig:two_branch}).
With regularization for overlap~\eqref{eqn:overlap} and balancing~\eqref{eqn:balancing}, the learned high-level variables correspond to the two branches.
Within each branch, the reduction behaves as described for the linear chain in Ex.~\ref{ex:linear_chain}, where $\tau$ focusses on the direct parent of the target and $\omega$ is spread across all variables in the chain.
Comprehensive experimental details are given in App.~\ref{app:experimental_details}.

\subsection{Double Well}
\looseness-1
For a simulation based on an ODE system, we learn a targeted reduction of a ball moving in a double well potential under linear friction, as shown in Fig.~\ref{fig:double_well}.
The state vector $\Xb$ encodes the $x$-position and velocity in $x$-direction of the ball at each time steps of the simulation.
As shift-interventions, we apply small random shifts of the ball's velocity at each simulation time step, mimicking an applied external force.
Initially, the ball starts on the left-hand side of the potential and starts oscillating. 
Since the ball experiences friction, it ends up in either the left or right minimum of the potential.
The friction is relatively strong, such that, depending on the initial conditions and applied shift interventions, the ball either stays in the left well or crosses the middle hump once and stays in the right well (see Fig.~\ref{fig:double_well}(f)).
We learn a simple TCR with a single cause $Z$ that explains the target $Y$.
Further details about the nonlinear ODE system and training are given in App.~\ref{app:double_well}.
\par \looseness-1
The learned TCR parameters are shown in Fig.~\ref{fig:double_well}(c, d).
The $\taub_1$ and $\omegab_1$ parameters for velocity are such that the larger the velocity is to the right, the higher $Z$ and therefore the higher the predicted target $Y$, where positive $Y$ correspond to the right well and negative to the left.
Similarly, for the position parameter: the more negative the position just before the critical point of the ball crossing the hump, the higher the probability of predicting to stay in the left well.
This corresponds to the correct dynamics of the system and also identifies the main drivers that influence the outcome $Y$.
Fig.~\ref{fig:double_well}(e) shows how the learned TCR separates the phase space into simulations with enough momentum to the right to make it over the hump (pink) and those without (turquoise).
\par \looseness-1
Note that TCR does not focus on the part of $\Xb$ which best predicts the final state of the system---like the position just before the end of the simulation.
It rather highlights the variables which have the most impact on the target when they are intervened on, emphasizing the decisive time span when the ball either crosses the middle hump or stays in the left well.

\subsection{Spring-Mass System}
\label{ssec:spring_mass_system}

We simulate a two-dimensional system of four point masses with different weights connected by springs to their respective nearest neighbors, similar to the motivating example introduced in Sec.~\ref{sec:introdcution}.
Initially, the masses are arranged in a rectangle in space such that the springs are at rest length.
The masses have a random initial velocity, as shown in Fig.~\ref{fig:spring_mass}(a).
As interventions, we apply random shifts to the velocities in $x$- and $y$-direction of each mass.
The target of the simulation is the center of mass speed in $(1,1)$-direction.
We learn a TCR with two high-level causes.
The full experimental details are given in App.~\ref{app:spring_mass_system}.
\par \looseness-1
While the velocities of the individual masses are coupled, the center of mass velocities in $x$- and $y$-direction of the system as a whole are independent, since the system is freely moving in space.
The learned TCR correctly identifies these as the two independent causes of the target, with variable $Z_1$ corresponding to the $y$-direction and $Z_2$ to the $x$-direction.
On average, each mass receives a similar shift in velocity through the applied interventions.
However, since the masses are different, the shifts correspond to different contributions to the momentum of the system as a whole impacting the target.
This is reflected in the relative weights of the learned maps being proportional to the weight of each point mass, as shown in Fig~\ref{fig:spring_mass}(f).
\par
A second experimental setting with two groups of interconnected masses is shown in App.~\ref{app:spring_mass_grouped}, demonstrating a TCR learning independent causes along the mass index.

\section{DISCUSSION}

We introduce a novel approach for understanding complex simulations by learning high-level causal explanations from low-level models.
Our Targeted Causal Reduction (TCR) framework leverages interventions to obtain simplified, high-level representations of the causes of a target phenomenon. 
We formulate the intervention-based consistency constraint as an information theoretic learning objective, which favors the most causally relevant explanations of the target. 
Under linearity and Gaussianity assumptions, we provide analytical solutions and study their uniqueness, which provides insights into TCR's governing principles. 
One key assumption to obtain identifiability is that the leaf node, the target, is observed.
However, this is to the best of our knowledge the first identifiability proof for a general class of CMR for which the high-level variables are continuous and partially unknown. 
Notably, the $n$-cause TCR provides a form of causal \textit{independent component analysis} akin to the work of \cite{liang2023causal} but in a non-invertible setting and with a one dimensional target.  
We provide an algorithm for linear TCR and show it can effectively uncover the key causal factors influencing a phenomenon of interest. 
We demonstrate TCR on both synthetic models and scientific simulations, highlighting its potential for addressing the challenges posed by increasingly complex systems in scientific research. 

While we develop a CMR framework to learn high-level explanations for simulations, the simulation itself does not have to be explicitly formulated as a causal model and 
the causal relationships between variables in $\Xb$ do not have to be known a priori. 
The only additional element needed to learn TCR is a notion of shift-interventions.
We think that most scientific simulations based on differential equations naturally allow for a reasonable notion of shift interventions. 

\paragraph{Limitations and future work.}
To foster interpretability and tractability, we made Gaussian approximations and used linear $\taub$ and $\omegab$ maps.
While this has clear benefits, this may be too limiting for some complex simulations, and future work should explore more flexible approaches. 
Additionally, our method relies on performing a large number of interventions in simulation runs, which represents an additional cost in the context of large-scale simulation.
How to make the algorithm scale to this setting is left to future work. 

\begin{acknowledgements}
We thank Sergio Hernan Garrido Mejia and Yuchen Zhu for insightful discussions.
\par
This publication was supported by the German Federal Ministry of Education and Research (BMBF) through the Tübingen AI Center (FKZ 01IS18039A) and by the German Research Foundation (Deutsche Forschungsgemeinschaft, DFG) through the Machine Learning Cluster of Excellence (EXC 2064/1, project 390727645).
\end{acknowledgements}

{\small
\bibliography{references}

\begin{thebibliography}{33}
\providecommand{\natexlab}[1]{#1}
\providecommand{\url}[1]{\texttt{#1}}
\expandafter\ifx\csname urlstyle\endcsname\relax
  \providecommand{\doi}[1]{doi: #1}\else
  \providecommand{\doi}{doi: \begingroup \urlstyle{rm}\Url}\fi

\bibitem[Amos et~al.(2018)Amos, Jimenez, Sacks, Boots, and
  Kolter]{amos2018differentiable}
Brandon Amos, Ivan Jimenez, Jacob Sacks, Byron Boots, and J.~Zico Kolter.
\newblock Differentiable {MPC} for {E}nd-to-end {P}lanning and {C}ontrol.
\newblock \emph{Advances in Neural Information Processing Systems}, 2018.

\bibitem[Beckers and Halpern(2019)]{beckers2019abstracting}
Sander Beckers and Joseph~Y Halpern.
\newblock Abstracting {C}ausal {M}odels.
\newblock \emph{{AAAI} {C}onference on {A}rtificial {I}ntelligence}, 2019.

\bibitem[Beckers et~al.(2020)Beckers, Eberhardt, and
  Halpern]{beckers2020approximate}
Sander Beckers, Frederick Eberhardt, and Joseph~Y Halpern.
\newblock Approximate {C}ausal {A}bstractions.
\newblock \emph{Uncertainty in Artificial Intelligence}, 2020.

\bibitem[Besserve and Sch{\"o}lkopf(2022)]{besserve2022learning}
Michel Besserve and Bernhard Sch{\"o}lkopf.
\newblock Learning soft interventions in complex equilibrium systems.
\newblock \emph{Uncertainty in Artificial Intelligence}, 2022.

\bibitem[Besserve et~al.(2020)Besserve, Mehrjou, Sun, and
  Sch{\"o}lkopf]{besserve2018counterfactuals}
Michel Besserve, Arash Mehrjou, R{\'e}my Sun, and Bernhard Sch{\"o}lkopf.
\newblock Counterfactuals uncover the modular structure of deep generative
  models.
\newblock \emph{International Conference on Learning Representations}, 2020.

\bibitem[Bongers et~al.(2021)Bongers, Forr{\'e}, Peters, and
  Mooij]{bongers2021foundations}
Stephan Bongers, Patrick Forr{\'e}, Jonas Peters, and Joris~M Mooij.
\newblock Foundations of structural causal models with cycles and latent
  variables.
\newblock \emph{The Annals of Statistics}, 2021.

\bibitem[Breakspear(2017)]{breakspear2017dynamic}
Michael Breakspear.
\newblock Dynamic models of large-scale brain activity.
\newblock \emph{Nature neuroscience}, 2017.

\bibitem[Budhathoki et~al.(2022)Budhathoki, Minorics, Bl{\"o}baum, and
  Janzing]{budhathoki2022causal}
Kailash Budhathoki, Lenon Minorics, Patrick Bl{\"o}baum, and Dominik Janzing.
\newblock Causal structure-based root cause analysis of outliers.
\newblock \emph{International Conference on Machine Learning}, 2022.

\bibitem[Chalupka et~al.(2015)Chalupka, Perona, and
  Eberhardt]{chalupka2014visual}
Krzysztof Chalupka, Pietro Perona, and Frederick Eberhardt.
\newblock Visual causal feature learning.
\newblock \emph{Uncertainty in Artificial Intelligence}, 2015.

\bibitem[Chalupka et~al.(2016)Chalupka, Bischoff, Perona, and
  Eberhardt]{chalupka2016unsupervised}
Krzysztof Chalupka, Tobias Bischoff, Pietro Perona, and Frederick Eberhardt.
\newblock Unsupervised discovery of {E}l {N}ino using causal feature learning
  on microlevel climate data.
\newblock \emph{Uncertainty in Artificial Intelligence}, 2016.

\bibitem[Deco and Kringelbach(2014)]{deco2014great}
Gustavo Deco and Morten~L Kringelbach.
\newblock Great expectations: using whole-brain computational connectomics for
  understanding neuropsychiatric disorders.
\newblock \emph{Neuron}, 2014.

\bibitem[Euler(1794)]{euler1794institutiones}
Leonhard Euler.
\newblock \emph{Institutiones calculi integralis}, volume~4.
\newblock Academia Imperialis Scientiarum, 1794.

\bibitem[Geiger et~al.(2021)Geiger, Lu, Icard, and Potts]{geiger2021causal}
Atticus Geiger, Hanson Lu, Thomas Icard, and Christopher Potts.
\newblock Causal {A}bstractions of {N}eural {N}etworks.
\newblock \emph{Advances in Neural Information Processing Systems}, 2021.

\bibitem[Geiger et~al.(2023)Geiger, Potts, and Icard]{geiger2023causal}
Atticus Geiger, Chris Potts, and Thomas Icard.
\newblock Causal {A}bstraction for {F}aithful {M}odel {I}nterpretation.
\newblock \emph{arXiv preprint arXiv:2301.04709}, 2023.

\bibitem[Geiger et~al.(2024)Geiger, Wu, Potts, Icard, and
  Goodman]{geiger2023finding}
Atticus Geiger, Zhengxuan Wu, Christopher Potts, Thomas Icard, and Noah~D
  Goodman.
\newblock Finding {A}lignments {B}etween {I}nterpretable {C}ausal {V}ariables
  and {D}istributed {N}eural {R}epresentations.
\newblock \emph{Conference on Causal Learning and Reasoning}, 2024.

\bibitem[Grassl(2000)]{grassl2000status}
Hartmut Grassl.
\newblock Status and {I}mprovements of {C}oupled {G}eneral {C}irculation
  {M}odels.
\newblock \emph{Science}, 2000.

\bibitem[Hansen and Sokol(2014)]{hansen2014causal}
Niels Hansen and Alexander Sokol.
\newblock {Causal interpretation of stochastic differential equations}.
\newblock \emph{Electronic Journal of Probability}, 2014.

\bibitem[Massidda et~al.(2023)Massidda, Geiger, Icard, and
  Bacciu]{massidda2023causal}
Riccardo Massidda, Atticus Geiger, Thomas Icard, and Davide Bacciu.
\newblock {C}ausal {A}bstraction with {S}oft {I}nterventions.
\newblock \emph{Conference on Causal Learning and Reasoning}, 2023.

\bibitem[Millet and Morien(2005)]{millet2005implicit}
Annie Millet and Pierre-Luc Morien.
\newblock On implicit and explicit discretization schemes for parabolic spdes
  in any dimension.
\newblock \emph{Stochastic Processes and their Applications}, 2005.

\bibitem[Mooij et~al.(2013)Mooij, Janzing, and
  Sch\"{o}lkopf]{mooij2013ordinary}
Joris~M. Mooij, Dominik Janzing, and Bernhard Sch\"{o}lkopf.
\newblock From {O}rdinary {D}ifferential {E}quations to {S}tructural {C}ausal
  {M}odels: the deterministic case.
\newblock \emph{Uncertainty in Artificial Intelligence}, 2013.

\bibitem[Peters et~al.(2017)Peters, Janzing, and Sch\"olkopf]{causality_book}
Jonas Peters, Dominik Janzing, and Bernhard Sch\"olkopf.
\newblock \emph{Elements of Causal Inference -- Foundations and Learning
  Algorithms}.
\newblock MIT Press, 2017.

\bibitem[Reichstein et~al.(2019)Reichstein, Camps-Valls, Stevens, Jung,
  Denzler, Carvalhais, and Prabhat]{reichstein2019deep}
Markus Reichstein, Gustau Camps-Valls, Bjorn Stevens, Martin Jung, Joachim
  Denzler, Nuno Carvalhais, and Prabhat.
\newblock Deep learning and process understanding for data-driven {E}arth
  system science.
\newblock \emph{Nature}, 2019.

\bibitem[Rischel and Weichwald(2021)]{rischel2021compositional}
Eigil~F Rischel and Sebastian Weichwald.
\newblock Compositional {A}bstraction {E}rror and a {C}ategory of {C}ausal
  {M}odels.
\newblock \emph{Uncertainty in Artificial Intelligence}, 2021.

\bibitem[Rubenstein et~al.(2017)Rubenstein, Weichwald, Bongers, Mooij, Janzing,
  Grosse-Wentrup, and Sch{\"o}lkopf]{rubenstein2017causal}
Paul~K Rubenstein, Sebastian Weichwald, Stephan Bongers, Joris~M Mooij, Dominik
  Janzing, Moritz Grosse-Wentrup, and Bernhard Sch{\"o}lkopf.
\newblock Causal {C}onsistency of {S}tructural {E}quation {M}odels.
\newblock \emph{Uncertainty in Artificial Intelligence}, 2017.

\bibitem[Safavi et~al.(2023)Safavi, Panagiotaropoulos, Kapoor,
  Ramirez-Villegas, Logothetis, and Besserve]{safavi2023uncovering}
Shervin Safavi, Theofanis~I Panagiotaropoulos, Vishal Kapoor, Juan~F
  Ramirez-Villegas, Nikos~K Logothetis, and Michel Besserve.
\newblock Uncovering the organization of neural circuits with generalized phase
  locking analysis.
\newblock \emph{PLOS Computational Biology}, 2023.

\bibitem[Sauer(2013)]{sauer2013computational}
Timothy Sauer.
\newblock Computational solution of stochastic differential equations.
\newblock \emph{Wiley Interdisciplinary Reviews: Computational Statistics},
  2013.

\bibitem[Sprites et~al.(2001)Sprites, Glymour, and
  Scheines]{sprites2001causation}
P~Sprites, C~Glymour, and R~Scheines.
\newblock Causation, prediction, and search. 2nd edn in, 2001.

\bibitem[Squires et~al.(2023)Squires, Seigal, Bhate, and
  Uhler]{squires2023linear}
Chandler Squires, Anna Seigal, Salil Bhate, and Caroline Uhler.
\newblock Linear {C}ausal {D}isentanglement via {I}nterventions.
\newblock \emph{International Conference on Machine Learning}, 2023.

\bibitem[von K{\"u}gelgen et~al.(2023{\natexlab{a}})von K{\"u}gelgen, Besserve,
  Wendong, Gresele, Keki{\'c}, Bareinboim, Blei, and
  Sch{\"o}lkopf]{von2023nonparametric}
Julius von K{\"u}gelgen, Michel Besserve, Liang Wendong, Luigi Gresele, Armin
  Keki{\'c}, Elias Bareinboim, David~M Blei, and Bernhard Sch{\"o}lkopf.
\newblock Nonparametric {I}dentifiability of {C}ausal {R}epresentations from
  {U}nknown {I}nterventions.
\newblock \emph{Advances in Neural Information Processing Systems},
  2023{\natexlab{a}}.

\bibitem[von K{\"u}gelgen et~al.(2023{\natexlab{b}})von K{\"u}gelgen, Mohamed,
  and Beckers]{von2023backtracking}
Julius von K{\"u}gelgen, Abdirisak Mohamed, and Sander Beckers.
\newblock Backtracking {C}ounterfactuals.
\newblock \emph{Conference on Causal Learning and Reasoning},
  2023{\natexlab{b}}.

\bibitem[Wendong et~al.(2023)Wendong, Keki{\'c}, von K{\"u}gelgen, Buchholz,
  Besserve, Gresele, and Sch{\"o}lkopf]{liang2023causal}
Liang Wendong, Armin Keki{\'c}, Julius von K{\"u}gelgen, Simon Buchholz, Michel
  Besserve, Luigi Gresele, and Bernhard Sch{\"o}lkopf.
\newblock Causal component analysis.
\newblock \emph{Advances in Neural Information Processing Systems}, 2023.

\bibitem[Zennaro et~al.(2023)Zennaro, Dr{\'a}vucz, Apachitei, Widanage, and
  Damoulas]{zennaro2023jointly}
Fabio~Massimo Zennaro, M{\'a}t{\'e} Dr{\'a}vucz, Geanina Apachitei, W~Dhammika
  Widanage, and Theodoros Damoulas.
\newblock Jointly {L}earning {C}onsistent {C}ausal {A}bstractions {O}ver
  {M}ultiple {I}nterventional {D}istributions.
\newblock \emph{Conference on Causal Learning and Reasoning}, 2023.

\bibitem[Zhu et~al.(2024)Zhu, Budhathoki, Kübler, and
  Janzing]{zhu2023meaningful}
Yuchen Zhu, Kailash Budhathoki, Jonas~M Kübler, and Dominik Janzing.
\newblock Meaningful {C}ausal {A}ggregation and {P}aradoxical {C}onfounding.
\newblock \emph{Conference on Causal Learning and Reasoning}, 2024.

\end{thebibliography}
}
\newpage

\onecolumn

\title{Targeted Reduction of Causal Models\\(Supplementary Material)}
\maketitle
\appendix

\section{SUPPLEMENTAL RELATED WORK}\label{app:relwork}

Desiderata for CMR have been addressed theoretically by several works, in particular in the context of CFL \citep{chalupka2014visual,chalupka2016unsupervised}, and subsequently with the notion of \textit{exact transformations}  \citep{rubenstein2017causal} and a more strongly constrained subclass: \textit{causal abstractions} \citep{beckers2019abstracting}.
An alternative framework for composing abstractions of finite models has been proposed by \citet{rischel2021compositional}. 
However, only few works have addressed how to build high-level representations from the  low-level system data only. 
A line of works focuses on language models \citep{geiger2021causal,geiger2023causal}, where high-level variables and interpretations are readily available.
\par
We start from the opposite direction and develop a general approach to build the high-level abstraction from the ground-up.
Such a construction is done in CFL \citep{chalupka2014visual,chalupka2016unsupervised}, where high-dimensional microscopic variables are turned into discrete high-level variables.
\citet{zennaro2023jointly} addressed this question in the context of finite and discrete domains, by minimizing the maximum Jensen-Shannon divergence over a finite set of perfect intervention distributions. 
In contrast with most works on CMR, our framework is fully compatible with imperfect (soft intervention) at the low level, which are more realistic and interpretable perturbations of many real-world systems than hard interventions. 
Soft interventions have been used for language model alignment \citep{geiger2023finding}, and their theoretical compatibility with the abstraction framework has been investigated by \citet{massidda2023causal}.  
Our approach aims at approximating an exact transformation, and is thus a relaxation of this setting.
\par
Other theoretical frameworks for approximate abstractions have been proposed \citep{beckers2020approximate,rischel2021compositional}.
Our work differs by providing an explicit loss well-suited to continuous causal models, that can be optimized efficiently and provide interpretable outcomes thanks to a cause-mechanism decomposition, a lower bound, and analytic solutions. 
\par
The way we relax constraints on low-level interventions shares also similarities with the views of \citet{zhu2023meaningful} who consider stochastic low-level do-interventions sampled according to the observational distribution, our work is instead focused on soft-interventions, for which we impose a prior distribution, reflecting the relative importance that we put on them.
Our optimization objective is averaged over this prior, such that it plays a role in the final solution.
\par
Our approach also relates to the search for optimal interventional or counterfactual manipulations to steer the output of a system to a particular value or distribution \citep{amos2018differentiable,besserve2018counterfactuals} or to best explain an observation \citep{budhathoki2022causal,von2023backtracking}.
We are in a way also selecting particular manipulations, but through the choice of dimensionality reduction $\omega$, such that they are interpretable at a high level.
\par
Finally, our approach relates to several works in causal representation learning, which have addressed identifiability of latent causal models from observational data, with \citep{liang2023causal} or without \citep{squires2023linear,von2023nonparametric} assumptions on the latent causal graph.
In contrast to those works, TCR does not assume an injective mapping of the mapping of the observations to the latent variables, such that the high-level model typically losses information relative to the low-level model.

\section{SUPPLEMENTAL BACKGROUND}
\label{app:supbkgd}

\subsection{Numerical schemes for simulations}\label{app:numschemes}
Methods for the numerical approximations of scientific models is a broad area spanning multiple fields.
We provide here a few elements based on a 1D example to justify how these models relate to SCMs. 
The Euler method \citep{euler1794institutiones}, can be used to approximate a 1D ODE of the form
\[
\begin{cases}
x(t_0)=x_0\,,\\
\frac{dx}{dt}=F(x(t))\,,    
\end{cases}
\]
with $F$ smooth real-valued function, using a discretized time grid with time step $\Delta t$.
The finite difference approximation of the derivative 
\[
x(t+\Delta t)-x(t)\approx \Delta t \cdot \frac{d x}{d t}\,,
\]
leads to the iterative numerical scheme for the approximation $\hat{x}(k\Delta t)$:
\[
\begin{cases}
\hat{x}(t_0)=x_0\,,&\\
\hat{x}(t_0+(k+1)\Delta t)= 
\hat{x}(t_0+k\Delta t)+\Delta t \cdot F(\hat{x}(t_0+k\Delta t))\,,& k>0\,.
\end{cases}
\]
This scheme is called \emph{explicit} because future values depend explicitly on past ones.
In that case, one may define the $N$ low-level endogenous variables as $\Xb=[\hat{x}(t_0+\Delta t),\,\dots,\,\hat{x}(t_0+N\Delta t)]^\top$.
They can be seen as pertaining to a chain SCM with structural equations
\[
\begin{cases}
\widehat{X}_1 \coloneqq x_0 + \Delta t \cdot F(x_0)&\\
\widehat{X}_{k+1} \coloneqq \widehat{X}_k + \Delta t \cdot F(\widehat{X}_k)\,,& k>1.
    \end{cases}
\]
Because the ODE describes deterministic dynamics, the corresponding SCM is deterministic as well, \textit{i.e.}\ exogenous variables can be taken as trivial zero constants.
However, if we turn this ODE into the following 1D SDE 
\[
\begin{cases}
X(t_0)=x_0\,,\\
dX=F(X(t))dt+\sigma_W\cdot dW\,,    
\end{cases}
\]
where $W$ is a standard Brownian motion, then the Euler-Murayama method generalizes the previous approximation
\citep{sauer2013computational}, and leads to an updated SCM with structural equation
\[
\begin{cases}
\widehat{X}_1 \coloneqq x_0 + \Delta t \cdot F(x_0)+U_1&\\
\widehat{X}_{k+1} \coloneqq \widehat{X}_k + \Delta t \cdot F(\widehat{X}_k)+U_{k+1}\,,& k>1\,,
    \end{cases}
\]
where the exogenous variables $U_k$ represent the increments of the scaled Brownian motion $\sigma_W\cdot W$ between successive time steps, and are thus jointly independent Gaussian due to fundamental properties of Brownian motion. 
\par
This approach generalizes to explicit numerical schemes for multivariate ODE and SDEs where the state variable $\Xb$ is an element of $\RR^n$.
As an illustration, we can take the following class of SDE models 
\[
\begin{cases}
\Xb(t_0)=\xb_0\,,\\
d\Xb=\Fb(\Xb(t))dt+\boldsymbol{\sigma}_W\cdot d\Wb\,,    
\end{cases}
\]
with $\Fb:\RR^n\to \RR^n$, $\boldsymbol{\sigma}_W=\RR^{(n\times n)}$, and $\Wb$ a $n$-dimensional standard Brownian motion. 
This leads to the scheme
\[
\begin{cases}
\widehat{\Xb}_1 \coloneqq \xb_0 + \Delta t \cdot \Fb(\xb_0)+\Ub_1&\\
\widehat{\Xb}_{k+1} \coloneqq \widehat{\Xb}_k + \Delta t \cdot \Fb(\widehat{X}_k)+\Ub_{k+1}\,,& k>1\,,
    \end{cases}
\]
where the $\Ub_{k}$ are now multivariate Gaussian variables, whose components may or may not be independent depending on the choice of the matrix $\boldsymbol{\sigma}_W$.
If the exogenous components are independent, the variables can be described by a standard SCM as introduced in the main text.
If the exogenous components are dependent, the variables can be described by a more general notion of SCM, allowing hidden confounding \citep{bongers2021foundations}.
\par
Further generalization to numerical schemes for Stochastic Partial Differential Equations (SPDEs) using finite difference approximations for partial derivatives with respect to other variables than time are also possible \citep{millet2005implicit}.

\subsection{Reduction of the Euler scheme for a system of point masses}\label{app:eulerpointmass}
In the context of the main text example, we assume each point mass is submitted to a fluid friction force opposing its movement with fixed coefficient $\lambda$.
Masses are moreover intervened on via additional external forces $\{\fb_k\}$.
Finally, internal forces are exerted on mass $k$ by other point masses of the system, summing up to $\gb_k$.
Newton's second law applied to individual masses results in the following system of 2D vector equations
\[
m_k \frac{d\vb_{k}}{dt} = -\lambda \vb_{k}(t)+\fb_{k}(t)+ \gb_{k}(t)\,.
\]
We can approximate each equation to iteratively estimate the $x$ and $y$ components of the speed of individual point masses in the system, using a small time-step $\Delta t$, such that we get the discrete time estimates $\hat{v}_{x,k}[n]\approx v_{x,k}(n\Delta t)$   and $\hat{v}_{y,k}[n]\approx v_{y,k}(n\Delta t)$ satisfying  
\begin{align*}
m_k \cdot \hat{v}_{x,k}[n+1] &\coloneqq (1-\Delta t \lambda)\cdot m_k \cdot \hat{v}_{x,k}[n]+\Delta t \cdot f_{x,k}(n\Delta t)+\Delta t\cdot g_{x,k}(n\Delta t)\,,\\
m_k \cdot \hat{v}_{y,k}[n+1] &\coloneqq  (1-\Delta t \lambda)\cdot  m_k \cdot \hat{v}_{y,k}[n]+\Delta t \cdot f_{y,k}(n\Delta t)+\Delta t\cdot g_{y,k}(n\Delta t).
\end{align*}
Here, $f$ represents external forces, $\lambda$ is a viscous damping coefficient, and $g$ denotes internal forces.
We consider $\ib$ to be the vector of all components of external forces, and the target variable to be the final horizontal speed of the center of mass at iteration $N$.
From the physics of freely moving systems of points, it is clear that the target variable can be predicted by considering only the horizontal dynamics of the center of mass.
More precisely, we integrate the sum of external forces over the time span of the experiment, and use the last intervened time point $n_f$ to predict the final outcome of the simulation, leading to the reduction 
\begin{align*}
Z_1^{(\omega(\ib))} &= \left(\sum_k m_k \right) v_{x,G}[n_f] = %
\sum_k m_k v_{x,k}[n_f]=\Delta t\sum_{n=0}^{n_f}\left(1-\Delta t \lambda\right)^{(n_f-n)}\sum_k f_{x,k}(n\Delta t))\,,\\
Y &= v_{x,G}[N] \coloneqq \left(1-\Delta t\lambda\right)^{(N-n_f)} Z_1+\sum_{n=n_f+1}^N \sum_k f_{x,k}(n\Delta t)) \, .
\end{align*}
To make the notation compatible with that used in our TCR framework, we can gather all speed variables in a high-dimensional vector $\Xb$ and all external force variables in a vector $\ib$, the high-level causal model is thus generated by a linear $\tau$-map and linear $\omega$ map for shift interventions, taking the form of the exact transformation
\begin{align*}
Z_1 &= \taub_1^\top \Xb +\omegab_1^\top \ib\,,\quad 
Y = \taub_0^\top \Xb +\omegab_0^\top \ib\coloneqq f(Z_1)\,.
\end{align*}
where the term $\omegab_0^\top \ib$ accounts for interventions happening between discrete times $n_f+1$ and $N$ and thus affect $Y$ without being mediated by $Z_1$. In our framework, only interventions mediated by the cause, reflected in the term $\omegab_1^\top \ib$, are accounted for in the high-level model. 

\subsection{Constructive transformations}

We complete the main text definition to include the constraint on the intervention map $\omega$
\begin{definition}
$(\tau,\omega):(\Xcal\to\Zcal,\Ical\to\Jcal)$ is a constructive $(\tau-\omega)$-transformation between model $\Lcal$ and $\Hcal$ if there exists an alignment map $\pi$ mapping each high-level endogenous variable to a subset of  low-level endogenous variables such that for all $k\neq l$, $\pi(k)\cap \pi(n)=\emptyset$ and we have both 
\begin{itemize}
\item for each component $\tau_k$ of $\tau$ there exists a function $\bar{\tau}_k$ such that for all $\xb$ in $\Xcal$,
\[
\tau_k (\xb) = \bar{\tau}_k(\xb_{\pi(k)})\,;
\]
    \item for each component $\omega_k$ of $\omega$ there exists a function $\bar{\omega}_k$ such that for all $\ib$ in $\Ical$,
\[
\omega_k (\ib) = \bar{\omega}_k(\ib_{\pi(k)})\,.
\]
\end{itemize}
\end{definition}

\section{PROOF OF MAIN TEXT RESULTS}
\label{app:proofs}
\subsection{Proof of Proposition~\ref{prop:consloss}}
We first reformulate Proposition~\ref{prop:consloss} more formally as follows.

\begin{proposition}\label{prop:conslossform}
The consistency loss is positive, invariant to invertible reparametrizations as defined in Definition~\ref{def:reparam}, and vanishes if and only if the transformation is exact for almost all interventions. 
It admits the following decomposition: 
\begin{equation}
\label{eq:CMdecapp}
   \Lcal_\mathrm{cons}\!
    =\!\EE_{i\sim P(\ib)}\!\! \left[
    \mathrm{KL}\left(
       \widehat{P}_{\tau}^{(\ib)}\left(\Zb\right)||{P}^{(\omega(\ib))}\left(\Zb\right)
    \right)+\EE_{\zb\sim \widehat{P}_{\tau}^{(\ib)}\left(\Zb\right)}\!\!\left[
        \mathrm{KL}\left(
            \widehat{P}_{\tau}^{(\ib)}\left({Y}|\zb)\right)||{P}^{(0)}\left({Y}|\zb\right)
        \right)
    \right]\right]\,,%
\end{equation}
and is an upper bound of the \emph{causal relevance loss} %
\begin{equation}\label{eq:Lrelapp}
\Lcal_\mathrm{rel} \!=\! \EE_{\ib\sim P(\ib)}\!\left[\mathrm{KL}\left(        \widehat{P}^{(\ib)}\left(Y\right)||{P}^{(\omega(\ib))}\left(Y\right)\right)\right]\!\leq\! \Lcal_\mathrm{cons} \,. 
\end{equation}
\end{proposition}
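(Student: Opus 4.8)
The plan is to treat the five assertions separately, leaning throughout on elementary properties of the KL divergence together with one structural feature of the TCR high-level model: its target mechanism $P(Y\mid\Zb)$ is never intervened on, since $\omega_0$ is the trivial map and interventions act only on the root causes $Z_k$. First I would dispose of positivity and the vanishing criterion directly. Because $\mathrm{KL}(\cdot\|\cdot)\geq 0$ pointwise and the expectation of a non-negative integrand is non-negative, $\Lcal_\mathrm{cons}\geq 0$. For the vanishing criterion, $\Lcal_\mathrm{cons}=0$ forces the non-negative integrand to vanish for $P(\ib)$-almost every $\ib$; since $\mathrm{KL}(\mu\|\nu)=0$ iff $\mu=\nu$, this is equivalent to $\widehat{P}_{\tau}^{(\ib)}(Y,\Zb)=P^{(\omega(\ib))}(Y,\Zb)$ for almost all $\ib$, which is exactly exactness of the transformation for almost all interventions.

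The decomposition follows from the chain rule for KL applied inside the expectation. For each fixed $\ib$, the key identity is
\[
\mathrm{KL}\!\left(\widehat{P}_{\tau}^{(\ib)}(Y,\Zb)\,\|\,P^{(\omega(\ib))}(Y,\Zb)\right)
= \mathrm{KL}\!\left(\widehat{P}_{\tau}^{(\ib)}(\Zb)\,\|\,P^{(\omega(\ib))}(\Zb)\right)
+ \EE_{\zb\sim\widehat{P}_{\tau}^{(\ib)}(\Zb)}\!\left[\mathrm{KL}\!\left(\widehat{P}_{\tau}^{(\ib)}(Y|\zb)\,\|\,P^{(\omega(\ib))}(Y|\zb)\right)\right].
\]
The crucial structural input is that the high-level conditional is invariant under intervention, i.e.\ $P^{(\omega(\ib))}(Y|\zb)=P^{(0)}(Y|\zb)$. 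Substituting this identity and taking the expectation over $\ib\sim P(\ib)$ yields Eq.~\eqref{eq:CMdecapp}, with the first summand reading as the cause-consistency term and the second as the mechanism-consistency term.

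For the lower bound by $\Lcal_\mathrm{rel}$, I would apply the same chain rule but factor the joint in the opposite order, $Y$ first and $\Zb\mid Y$ second:
\[
\mathrm{KL}\!\left(\widehat{P}_{\tau}^{(\ib)}(Y,\Zb)\,\|\,P^{(\omega(\ib))}(Y,\Zb)\right)
= \mathrm{KL}\!\left(\widehat{P}^{(\ib)}(Y)\,\|\,P^{(\omega(\ib))}(Y)\right)
+ \EE_{y\sim\widehat{P}^{(\ib)}(Y)}\!\left[\mathrm{KL}\!\left(\widehat{P}_{\tau}^{(\ib)}(\Zb|y)\,\|\,P^{(\omega(\ib))}(\Zb|y)\right)\right].
\]
Since the second term is non-negative, the marginal-$Y$ term is a pointwise lower bound; taking the expectation over $\ib$ gives $\Lcal_\mathrm{rel}\leq\Lcal_\mathrm{cons}$, Eq.~\eqref{eq:Lrelapp}. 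Equivalently, this is just monotonicity of KL under the marginalization map $(Y,\Zb)\mapsto Y$, so I could alternatively cite the data-processing inequality.

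The hard part will be the reparametrization invariance, which hinges on the earlier Def.~\ref{def:reparam}. Here I would argue that an invertible reparametrization of the pair $(\tau,f_{\gammab})$ induces a single invertible change of variables on $(Y,\Zb)$ that is applied \emph{simultaneously} to both arguments of every KL term, and then invoke invariance of KL under measurable, measurably-invertible transformations (Jacobian factors cancel between the two densities). The care needed is to verify that the induced map acts identically on the pushforward $\widehat{P}_{\tau}^{(\ib)}$ and on the high-level $P^{(\omega(\ib))}$ for each $\ib$, and that it is compatible with the intervention map $\omega$, so that the whole expectation $\Lcal_\mathrm{cons}$ is left unchanged; checking this compatibility is the only step that genuinely uses the constructive structure rather than generic KL identities.
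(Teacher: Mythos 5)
Your proposal is correct and follows essentially the same route as the paper's proof: positivity and the vanishing criterion from pointwise non-negativity of KL, the cause--mechanism decomposition via the chain rule combined with the invariance of the high-level conditional $P^{(\omega(\ib))}(Y|\zb)=P^{(0)}(Y|\zb)$ (since $\omega_0$ is trivial), the relevance bound via the chain rule factored in the reverse order with the non-negative conditional term dropped, and reparametrization invariance from KL-invariance under the invertible map $\tilde{\rho}$ applied to both arguments. The only difference is one of emphasis: the compatibility you flag as ``the hard part'' (that $\tilde{\rho}$ acts coherently on the pushforward and on the high-level model with an associated intervention map $\psi$) is not something the paper derives from the constructive structure---it is built directly into Definition~\ref{def:reparam}, so the paper's invariance argument is a one-line application of that definition.
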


\begin{proof}

\textbf{Positivity} of the loss comes from the positivity of the KL-divergence. Taking the expectation of this divergence with respect to $P(\ib)$ thus must be positive too.

\textbf{Invariance to reparameterizations.}
We assume a reparametrization $\rho$ designed according to the framework introduced in Appendix~$\ref{app:reparam}$. 
By invariance of the KL divergence to invertible transformations, we have equality between the KL associated to the two different reductions $(\tau,\omega)$ and $(\rho\circ \tau,\psi\circ\omega)$:
\[
\mathrm{KL}\left(
        \widehat{P}_{\tau}^{(\ib)}({Y},{\Zb})||P_{\Hcal,\gammab}^{(\omega(\ib))}(Y,\Zb)
    \right) = 
\mathrm{KL}\left(
        \tilde{\rho}_\#[\widehat{P}_{\tau}^{(\ib)}({Y},{\Zb})]||\tilde{\rho}_\#[P_{\Hcal,\gammab}^{(\omega(\ib))}(Y,\Zb)]
    \right) =\mathrm{KL}\left(
        \widehat{P}_{\rho\circ \tau}^{(\ib)}({Y},{\Zb})||P_{\Hcal,\gammab'}^{(\psi\circ\omega(\ib))}(Y,\Zb)
    \right) \, .
\]
The transformation $(\rho,\psi)$ thus leaves $\Lcal_\mathrm{cons}$ invariant.

\textbf{Cause-mechanism decomposition.}
Under our setting (see Sec.~\ref{ssec:framew}), the interventional distribution of the high-level causal model factorizes as
\[
P^{(\omega(\ib))}({Y},{\Zb})={P}^{(0)}\left({Y}|{\Zb}\right){P}^{(\omega(\ib))}\left(\Zb\right)\,.
\]
The pushforward (by reduction) of the interventional distribution of the low-level model factorizes as
\[
\widehat{P}^{(\ib)}({Y},{\Zb})=\widehat{P}^{(\ib)}\left({Y}|{\Zb}\right)\widehat{P}^{(\ib)}\left({\Zb}\right)\,,
\]
with $\widehat{P}^{(\ib)}\left({\Zb}\right)=\tau_{1,\#}[{P}^{(\ib)}(\Xb_{\pi(1)})]$ and $\widehat{P}^{(\ib)}\left({Y}|{\Zb}\right)=\frac{\tau_\#\left[P^{(\ib)}\left({\Xb}_{\pi(0)},{\Xb}_{\pi(1)}\right)\right]}{\tau_{1,\#}[{P}^{(\ib)}(\Xb_{\pi(1)})]}$ .

Thus, the KL divergence can be decomposed as
\begin{align*}
    \mathrm{KL}&\left(
        \widehat{P}^{(\ib)}({Y},{\Zb})||P^{(\omega(\ib))}({Y},{\Zb})
    \right) \\
    &= \int_{\Ycal} \int_{\Zcal} \widehat{P}^{(\ib)}({Y},{\Zb}) \log  
    \frac{
        \widehat{P}^{(\ib)}({Y},{\Zb})
        }
        {P^{(\omega(\ib))}({Y},{\Zb})
        } d\Zb dY\\
         &= \int_{\Ycal} \int_{\Zcal} \widehat{P}^{(\ib)}\left({Y}|{\Zb}\right)\widehat{P}^{(\ib)}\left({\Zb}\right) \log  
    \frac{
        \widehat{P}^{(\ib)}\left({Y}|{\Zb}\right)\widehat{P}^{(\ib)}\left({\Zb}\right)
        }
        {{P}^{(0)}\left({Y}|{\Zb}\right){P}^{(\omega(\ib))}\left(\Zb\right)} d\Zb dY\\
    &=\mathrm{KL}_Z\left(
        \widehat{P}^{(\ib)}\left(\Zb\right)||{P}^{(\omega(\ib))}\left(\Zb\right)
    \right) 
    +\EE_{z\sim \widehat{P}^{(\ib)}\left(\Zb\right)}\left[
        \mathrm{KL}_Y\left(
            \widehat{P}^{(\ib)}\left({Y}|{\Zb}=z\right)||{P}^{(0)}\left({Y}|{\Zb}=z\right)
        \right)
    \right]
\\
    &=\mathrm{KL}_Z\left(
        \widehat{P}^{(\ib)}\left({\Zb}\right)||{P}^{(\omega(\ib))}\left(\Zb\right)
    \right)
    + \mathrm{KL}_{Y,Z}\left(
        \widehat{P}^{(\ib)}\left(\widehat{Y},{\Zb}\right)||{P}^{(0)}\left({Y}|{\Zb}\right)\widehat{P}^{(\ib)}\left(\Zb\right)
    \right)\,.
\end{align*}

We call the first term \textit{cause consistency loss}, as it matches the definition of a consistency loss but for cause variables only. 
The second term can be thought of as a \textit{mechanism consistency loss}, where we use the ground truth low-level cause distribution to probe the similarity of the outputs of the ``true'' (in fact, the conditional distribution) and approximate mechanism.
Our interpretability choice prevents the high-level mechanism from being intervened on, so a single stochastic map (\textit{i.e.}\ a Markov kernel) must fit at best all the sampled experimental conditionals.

\paragraph{Lower bounding by causal relevance}

We may ask the question of causal relevance of high-level causes.
One way to quantify this is to assess whether the variations of the target due to low-level interventions are well captured by high-level interventions, which can be measured by a KL divergence on the target's marginal
\[
\Lcal_\mathrm{rel} = \EE_{\ib\sim p(\ib)}\left[\mathrm{KL}_Y\left(        \widehat{P}^{(\ib)}\left(Y\right)||{P}^{(\omega(\ib))}\left(Y\right)\right)\right] \, .
\]

Note: In the Gaussian 1D case, the formula for the causal relevance loss is
\[
\Lcal_\mathrm{rel} = \frac{1}{2}\EE_{\ib\sim p(\ib)}\left[
            \frac{({\mu}_Y+\alpha \omegab^\top \ib -\widehat{\mu}^{(\ib)}_Y)^2}{\sigma_Y^2} 
            + \frac{{\widehat{\sigma^2}}^{(\ib)}_Y}{\sigma_Y^2} -\ln{\left(\frac{{\widehat{\sigma^2}}^{(\ib)}_Y}{\sigma_Y^2}\right)} -1
        \right]\,.
\]

Interestingly, we can break down this term using
\begin{align*}
    \mathrm{KL}&\left(
        \widehat{P}^{(\ib)}({Y},{\Zb})||P^{(\omega(\ib))}({Y},{\Zb})
    \right) \\
    &=\mathrm{KL}_Y\left(
        \widehat{P}^{(\ib)}\left(Y\right)||{P}^{(\omega(\ib))}\left(Y\right)
    \right) 
    +\EE_{y\sim \widehat{P}^{(\ib)}\left(Y\right)}\left[
        \mathrm{KL}_Y\left(
            \widehat{P}^{(\ib)}\left({\Zb}|{Y}=y\right)||{P}^{(\omega(\ib))}\left({\Zb}|{Y}=y\right)
        \right)
    \right]
\end{align*}
where both terms are positive by positivity of the KL divergence.
As a consequence,
\begin{align*}
    \mathrm{KL}_Y\left(
        \widehat{P}^{(\ib)}\left(Y\right)||{P}^{(\omega(\ib))}\left(Y\right)
    \right) &=\mathrm{KL}\left(
        \widehat{P}^{(\ib)}({Y},{\Zb})||P^{(\omega(\ib))}({Y},{\Zb})
    \right)
    -\EE_{y\sim \widehat{P}^{(\ib)}\left(Y\right)}\left[
        \mathrm{KL}_Y\left(
            \widehat{P}^{(\ib)}\left({\Zb}|{Y}=y\right)||{P}^{(\omega(\ib))}\left({\Zb}|{Y}=y\right)
        \right)
    \right]\\&\leq \mathrm{KL}\left(
        \widehat{P}^{(\ib)}(\widehat{Y},{\Zb})||P^{(\omega(\ib))}({Y},{\Zb})
    \right)=\Lcal_\mathrm{cons}\,.
\end{align*}
so the minimized consistency loss is an upper bound to causal relevance.

\end{proof}

\subsection{Proof of Proposition~\ref{prop:analytic_solution}}
\anasol*

\begin{proof}

\textbf{Part 1: 1-cause TCR.
}

\textbf{Overview.} We exploit the positive definiteness of the KL loss and its continuity with respect to $\ib$.
Since the variables are jointly Gaussian, continuity is obvious from the analytical expression of the KL for Gaussian variables and continuity of the shift operation applied to the parameters of the Gaussian.
We exploit the cause-mechanism decomposition and the lower-bound by $\Lcal_\mathrm{cons}$ to progressively identify necessary conditions on parameters to have $\Lcal_\mathrm{cons}=0$ and finally check those conditions are sufficient.  

\textbf{Preliminaries.}
Let $N_0$ denote the size of $\pi(0)$ and $N_1$ be the size of $\pi(1)$.
The SCM is assumed uniquely solvable (Definition~\ref{def:SCM}), such that $\xb=A \xb+\ub$ has a unique solution for almost all values of $\ub$.
Since $\Ub$ has full support, this implies that $I_N-A$ is invertible. 
The low-level variables then satisfy
\[
\Xb^{(\ib)} = (I_N-A)^{-1}(\Ub+\ib)
\]
where, due to the block triangular form of $A$,
\[
(I_N-A)^{-1} =
\begin{bmatrix}
(I_{N_0}-A_{00})^{-1}\,, &  (I_{N_0}-A_{00})^{-1}A_{0\Omega}(I_{N_1}-A_{\Omega\Omega})^{-1}\\
 0 &(I_{N_1}-A_{\Omega\Omega})^{-1}\,.
\end{bmatrix}
\]
In the assumed model all low-level variables are either in $\pi(0)$ or in $\Omega$.
Since the CMR is constructive we have $\pi(1)\subset \Omega$.
Without loss of generality, we can impose $\pi(1) = \Omega$ by setting the unused components of $\taub_1$ and $\omegab_1$ to zero.
For an arbitrary interventional setting $\ib$, this leads to the mapping of the low-level variable to the high-level cause variable, which we denote $\widehat{Z}_1^{(\ib)}=\tau_1(\Xb^{(\ib)})$, to satisfy
\begin{equation}\label{eq:pushforwardz}
\widehat{Z}_1^{(\ib)} = \taub_1^\top (I_N-A)^{-1}(\Ub+\ib) = \bar{\taub}_1^\top (I_{N_1}-A_{\Omega\Omega})^{-1}(\Ub_{\Omega}+\ib_{\Omega}) \,.
\end{equation}
Moreover, because we assume also shift interventions in the high-level model, the cause $Z_1$ in this model has an interventional distribution satisfying
\[
P^{(\omega_1(\ib))}({Z}_1) = P^{(0)}\left({Z}_1-\omegab_1^\top(\ib)\right) \,.
\]

\textbf{Necessary conditions.} We are looking for solutions satisfying $\Lcal_\mathrm{cons}=0$.
By positivity of the KL divergence, this implies that for almost all $\ib$, the distribution of $\widehat{Z}_1^{(\ib)}$ matches the learned high-level interventional distribution of high-level cause $Z_1$, which satisfies
\[
P^{(\omega_1(\ib))}({Z}_1) = P^{(0)}\left({Z}_1-\omegab_1^\top(\ib)\right) \,.
\]

\textbf{Matching unintervened distributions of $Z_1$.} If Assum.~\ref{assum:prior} holds, the prior $P(\ib_{\Omega})$ has density with respect to the Lebesgue measure with support including a neighborhood of $\ib_{\Omega}=\boldsymbol{0}$.
By continuity of the KL divergence with respect to the intervention parameters, a solution making the consistency loss vanish needs to have the KL divergence term vanish for $\ib_{\Omega}=\boldsymbol{0}$ (otherwise we could find a neighborhood of $\ib_{\Omega}=\boldsymbol{0}$ such that the KL divergence does not vanish, by continuity of the KL divergence, and $\Lcal_\mathrm{cons}$ would be non-vanishing, contradicting our assumption). 

Alternatively, Assum.~\ref{assum:prior2} also obviously implies vanishing of the KL divergence for the unintervened setting. 

This vanishing of the KL divergence entails, again by positivity, that its terms, the two unintervened densities, are equal, such that we get, using Eq.~(\ref{eq:pushforwardz})
\begin{align}
P^{(0)}({Z}_1) =&\, (\tau_1)_{\#}\left[ P(\Xb) \right]= (\bar{\taub}_1^\top (I_{N_1}-A_{\Omega\Omega})^{-1})_{\#} \left[P(\Ub_{\Omega})\right]\\
=&\, \Ncal(\bar{\taub}_1^\top (I_{N_1}-A_{\Omega\Omega})^{-1}\mub_{{\Omega}},\bar{\taub}_1^\top (I_{N_1}-A_{\Omega\Omega})^{-1}\Sigma_{{\Omega}} (I_{N_1}-A_{\Omega\Omega})^{-\top}\bar{\taub}_1)\,,
\end{align}
which entails the following constraints on the variance and mean of the high-level cause
\begin{equation}\label{eq:sigz1}
\sigma_{Z_1}^2 = \bar{\taub}_1^\top (I_{N_1}-A_{\Omega\Omega})^{-1}\Sigma_{{\Omega}} (I_{N_1}-A_{\Omega\Omega})^{-\top}\bar{\taub}_1
\end{equation}
and 
\begin{equation}\label{eq:muz1}
\mu_{Z_1}= \bar{\taub}_1^\top (I_{N_1}-A_{\Omega\Omega})^{-1}\mub_{{\Omega}} \,.
\end{equation}

\textbf{Matching interventional distributions of $Z_1$.} For the same reasons, under Assum.~\ref{assum:prior}, we can further match the interventional distributions in an open set included in the interior of the support of $P(\ib)$, such that for all $\ib$ in this open set the following distributions are the same
\begin{align*}
    P^{(\omega_1(\ib))}({Z}_1) =&\, 
    \Ncal(\bar{\taub}_1^\top (I_{N_1}-A_{\Omega\Omega})^{-1}(\mub_{\Omega}+\ib_{\Omega}),\bar{\taub}_1^\top (I_{N_1}-A_{\Omega\Omega})^{-1}\Sigma_{\Omega} (I_{N_1}-A_{\Omega\Omega})^{-\top}\bar{\taub}_1)\,\mbox{and}\\
    (\tau_1)_{\#}\left[ P^{(\ib)}(\Xb) \right]=&\,\Ncal(\mu_{Z_1}+\omegab_1^\top \ib,\sigma_{Z_1}^2) 
    =\Ncal(\bar{\tau}_1^\top (I_{N_1}-A_{\Omega\Omega})^{-1}\mub_{\Omega}+\omegab_1^\top \ib,\sigma^2_{Z_1}) \,.
\end{align*}
Indeed, otherwise the KL would not vanish in a neighborhood of non-zero measure and would contradict the assumption that $\Lcal_\mathrm{cons}$ vanishes. 

This implies that for all $\ib$ in this open neighborhood
\[
\bar{\taub}_1^\top (I_{N_1}-A_{\Omega\Omega})^{-1}(\mub_{\Omega}+\ib_{\Omega})=\bar{\taub}_1^\top (I_{N_1}-A_{\Omega\Omega})^{-1}\mub_{\Omega}+\bar{\omegab}_1^\top \ib_{\Omega}\,,
\]
which simplifies to
\[
\bar{\taub}_1^\top (I_{N_1}-A_{\Omega\Omega})^{-1}\ib_{\Omega}=\bar{\omegab}_1^\top \ib_{\Omega}\,.
\]

Since this equality between two linear functions of $\ib_{\pi(1)}$ is valid on an open set of the vector space of $\ib_{\pi(1)}$, these functions must be equal (we can reparameterize $\ib$ to show that the linear maps must match on a basis of the space, so they are equal).
This is valid if and only if, in addition to Eqs.~(\ref{eq:sigz1}-\ref{eq:muz1}), 
\begin{equation}\label{eq:indent1}
\bar{\omegab}_1 =  (I_{N_1}-A_{\Omega\Omega})^{-\top}\bar{\taub}_1\,,
\end{equation}
is verified. 

Alternatively, we obtain the same result by replacing Assum.~\ref{assum:prior} by Assum.~\ref{assum:prior2}.
Indeed, the finite distribution over interventions imposes that the KL term inside the expectation must vanish for each of them (including the unintervened distribution).
As long as the collection of finite interventions vectors forms a rank $\#\Omega=\#\pi(1)=N_1$ family, we can choose a subset of $N_1$ such vectors $\{\ib^1_{\Omega},\dots,\ib_{\Omega}^{N_1}\}$ such that it forms a linearly independent family.
It can be used to build the matrix equality
\begin{equation}\label{eq:indent1matrix}
\bar{\taub}_1^\top (I_{N_1}-A_{\Omega\Omega})^{-1}\left[\ib^1_{\Omega},\dots,\ib_{\Omega}^{N_1}\right]=\omegab_1^\top \left[\ib^1_{\Omega},\dots,\ib_{\Omega}^{N_1}\right]
\end{equation}
where the matrix $\left[\ib^1_{\Omega},\dots,\ib_{\Omega}^{N_1}\right]$ is invertible.
By right-multiplying Eq.~(\ref{eq:indent1matrix}) by this inverse, we obtain Eq.~(\ref{eq:indent1}) again.

\textbf{Matching distributions of $Y$.} We can move on to check the implication of consistency of the effect's conditional.
It entails for almost all of $\ib$
\[
\widehat{P}^{(\ib)}(Y|Z)\widehat{P}^{(\ib)}(Z)={P}^{(\omega(\ib))}(Y|Z)\widehat{P}^{(\ib)}(Z)={P}^{(0)}(Y|Z)\widehat{P}^{(\ib)}(Z)\,.
\]
Introducing $T=\begin{bmatrix}
    \taub_0^\top \\\taub_1^\top
\end{bmatrix}$ the left-hand side is obtained by using
\[
(Y,Z)\sim \Ncal(T\mu_\Xb,T \Sigma_{\Xb}T^\top) \,.
\]
And the right-hand side by using
\[
Y=f(Z_1)+R_0 \,.
\]
Fitting first only the marginals of $Y$, we obtain necessary conditions.
We have
\[
\widehat{P}^{(\ib)}(Y)={P}^{(\omega(\ib))}(Y)
\]
where for the left-hand side
\[
Y=\bar{\taub}_0^\top (I_{N_0}-A_{00})^{-1}\Ub_{\pi(0)} +\bar{\taub}_0^\top(I_{N_0}-A_{00})^{-1}A_{0\Omega}(I_{N_1}-A_{\Omega\Omega})^{-1}(\Ub_{\pi(1)}+\ib)
\]
and for the right-hand side
\[
Y\sim f_\# [P^{(\omega(\ib))}(Z_1)]*P(R_0) \,.
\]
Given the affine mechanism assumption of Eq.~(\ref{eq:linhighlevelmech}), $f(Z)=\alpha Z+\beta $ and under Assum.~\ref{assum:prior}, equality of marginal distributions entails the following equality for all $\ib_{\Omega}$ in an open neighborhood of $0$ (otherwise $\Lcal_\mathrm{rel}\leq \Lcal_\mathrm{cons}$ would not vanish)
\[
\bar{\taub}_0^\top (I_{N_0}-A_{00})^{-1}\mub_{\pi(0)} +\bar{\taub}_0^\top(I_{N_0}-A_{00})^{-1}A_{0\Omega}(I_{N_1}-A_{\Omega\Omega})^{-1}(\mub_{\Ub_1}+\ib_{\Omega})= \alpha \bar{\taub}_1^\top (I_{N_1}-A_{\Omega\Omega})^{-1}(\mu_{\Omega}+\ib_{\Omega}) +\beta +\mub_{R_0}
\]
which requires (setting $\ib=0$)
\[
\beta +\mub_{R_0}= \bar{\taub}_0^\top (I_{N_0}-A_{00})^{-1}\mub_{{\pi(0)}} \,.
\]
We can fix $\mub_{R_0}$ to zero to avoid redundancy of additive constants, such that
\begin{equation}\label{eq:biasconst}
\mu_{Y|Z=0}= \beta= \bar{\taub}_0^\top (I_{N_0}-A_{00})^{-1}\mub_{{\pi(0)}}
\end{equation}
and consistency of non-zero shift interventions additionally entail for all $\ib_{\Omega}$ in the support of $P(\ib_{\Omega})$
\[
\bar{\taub}_0^\top(I_{N_0}-A_{00})^{-1}A_{0\Omega}(I_{N_1}-A_{\Omega\Omega})^{-1}\ib_{\Omega}= \alpha \bar{\taub}_1^\top (I_{N_1}-A_{\Omega\Omega})^{-1}\ib_{\Omega} \,.
\]
Since one can always choose a linearly independent family of vectors $\ib_{\Omega}$ within the open neighborhood of zero for which this equality holds, this yields
\[
\bar{\taub}_0^\top(I_{N_0}-A_{00})^{-1}A_{0\Omega}(I_{N_1}-A_{\Omega\Omega})^{-1}= \alpha \bar{\taub}_1^\top (I_{N_1}-A_{\Omega\Omega})^{-1} \,.
\]
Then, right-multiplying by $(I_{N_1}-A_{\Omega\Omega})$, we get
\begin{equation}\label{eq:tauconst}
A_{0\Omega}^\top(I_{N_0}-A_{00})^{-\top}\bar{\taub}_0 = \alpha \bar{\taub}_1 \,.
\end{equation}
Similarly as above, the same conclusion can be drawn if we replace Assum.~\ref{assum:prior} by Assum.~\ref{assum:prior2}.

\textbf{Sufficiency of the constraints.} 
We have derived the expressions of the TCR parameters in Eqs.~(\ref{eq:indent1},\ref{eq:biasconst},\ref{eq:tauconst}) from necessary conditions for matching the marginals of high-level variables, $P(Z_1)$ and $P(Y)$, to their corresponding pushforward distributions of the low-level variables, $(\tau_1)_\# [P(\Xb)]$ and $(\tau_0)_\# [P(\Xb)]$.
Now what remains is to check the same for conditional distributions to show those conditions are sufficient.
Indeed, this implies that the joint high-level distributions of $(Y,Z_1)$ and $(\tau_0(\Xb),\tau_1(\Xb))$ are matching.
\par
Let us first note that given that the low-level model, the $\tau$ maps and the high-level mechanisms are linear or affine, and that the low-level exogenous variables are Gaussian, the exogenous high-level variable will necessarily be Gaussian as well to satisfy the consistency constraints. 
\par
Let us now compute the covariance matrix of the low-level variables. 
\[
cov(\Xb)=
\begin{bmatrix}
(I_{N_0}-A_{00})^{-1}& (I_{N_0}-A_{00})^{-1}A_{0\Omega}(I_{N_1}-A_{\Omega\Omega})^{-1}\\
\boldsymbol{0}  &(I_{N_1}-A_{\Omega\Omega})^{-1} 
\end{bmatrix} \Sigma_U \begin{bmatrix}
(I_{N_0}-A_{00})^{-\top} & \boldsymbol{0}\\
 (I_{N_1}-A_{\Omega\Omega})^{-\top}A_{0\Omega}^\top(I_{N_0}-A_{00})^{-\top} &(I_{N_1}-A_{\Omega\Omega})^{-\top}
\end{bmatrix}\,.
\]
Because the exogenous covariance is block diagonal, we get
\[
=
\begin{bmatrix}
\begin{matrix}
   (I_{N_0}-A_{00})^{-1}A_{0\Omega}(I_{N_1}-A_{\Omega\Omega})^{-1} \Sigma_{\Omega}  (I_{N_1}-A_{\Omega\Omega})^{-\top}A_{0\Omega}^\top(I_{N_0}-A_{00})^{-\top}\\+
  (I_{N_0}-A_{00})^{-1} \Sigma_{\pi(0)} (I_{N_0}-A_{00})^{-\top}\end{matrix}
 &(I_{N_0}-A_{00})^{-1}A_{0\Omega}(I_{N_1}-A_{\Omega\Omega})^{-1} \Sigma_{\Omega}(I_{N_1}-A_{\Omega\Omega})^{-\top} \\ \hline
  (I_{N_1}-A_{\Omega\Omega})^{-1}\Sigma_{\Omega}(I_{N_1}-A_{\Omega\Omega})^{-\top}A_{0\Omega}^\top(I_{N_0}-A_{00})^{-\top} &(I_{N_1}-A_{\Omega\Omega})^{-1}\Sigma_{\Omega}(I_{N_1}-A_{\Omega\Omega})^{-\top} 
  \end{bmatrix} \,.
\]
Then, if we denote $\widehat{Y}=\tau_0(\Xb)$
\[
cov((\widehat{Y},\widehat{Z}_1))=T cov(\Xb) T^{\top}
\]
and we can derive its conditional mean and covariance
\[
\mu_{\widehat{Y}|\hat{z}_1}= \mu_{\widehat{Y}}+ \bar{\taub}_0^\top (I_{N_0}-A_{00})^{-1}A_{0\Omega}(I_{N_1}-A_{\Omega\Omega})^{-1} \Sigma_{\pi(1)}(I_{N_1}-A_{\Omega\Omega})^{-\top} \bar{\taub}_1
\left(\bar{\taub}_1^\top (I_{N_1}-A_{\Omega\Omega})^{-1}\Sigma_{\Omega}(I_{N_1}-A_{\Omega\Omega})^{-\top}\bar{\taub}_1 \right)^{-1}\left(\hat{z}_1-\mu_{Z_1}\right) \, .
\]
Thus using the above equation
\[
\mu_{\widehat{Y}|\hat{z}_1}= \mu_{\widehat{Y}}+ \alpha \left(\hat{z}_1-\mu_{Z_1}\right)=\bar{\taub}_0^\top (I_{N_0}-A_{00})^{-1}\mub_{\pi(0)} +\bar{\taub}_0^\top(I_{N_0}-A_{00})^{-1}A_{0\Omega}(I_{N_1}-A_{\Omega\Omega})^{-1}\mub_{\Omega}+ \alpha \hat{z}_1 -\alpha\bar{\taub}_1^\top  (I_{N_1}-A_{\Omega\Omega})^{-1}\mub_{\Omega}\,,
\]
which further simplifies with the same equation to
\[
\mu_{\widehat{Y}|\hat{z}_1}= \mu_{\widehat{Y}}+ \alpha \left(\hat{z}_1-\mu_{Z_1}\right)=\bar{\taub}_0^\top (I_{N_0}-A_{00})^{-1}\mub_{\Ub_0}+ \alpha \hat{z}_1 \,.
\]

Moreover,
\begin{multline*}
\mbox{var}({\widehat{Y}|\hat{z}_1})= \sigma_{\widehat{Y}}^2- \taub_0^\top (I_{N_0}-A_{00})^{-1}A_{0\Omega}(I_{N_1}-A_{\Omega\Omega})^{-1} \Sigma_{\Omega}(I_{N_1}-A_{\Omega\Omega})^{-\top} \bar{\taub}_1
\left(\bar{\taub}_1^\top (I_{N_1}-A_{\Omega\Omega})^{-1}\Sigma_{\Omega}(I_{N_1}-A_{\Omega\Omega})^{-\top}\bar{\taub}_1 \right)^{-1}\\
\bar{\taub}_1^\top(I_{N_1}-A_{\Omega\Omega})^{-1}\Sigma_{\Omega}(I_{N_1}-A_{\Omega\Omega})^{-\top}A_{0\Omega}^\top(I_{N_0}-A_{00})^{-\top}\bar{\taub}_0
\end{multline*}
again, using the above equation this leads to the simplification
\begin{multline*}
\mbox{var}({\widehat{Y}|\hat{z}_1})= \sigma_{\widehat{Y}}^2- \alpha
\bar{\taub}_1^\top(I_{N_1}-A_{\Omega\Omega})^{-1}\Sigma_{\pi(1)}(I_{N_1}-A_{\Omega\Omega})^{-\top}A_{0\Omega}^\top(I_{N_0}-A_{00})^{-\top}\bar{\taub}_0\\
= \sigma_{\widehat{Y}}^2- \alpha
\bar{\taub}_1^\top(I_{N_1}-A_{\Omega\Omega})^{-1}\Sigma_{\pi(1)}(I_{N_1}-A_{\Omega\Omega})^{-\top}\bar{\taub}_1\alpha\\
=  \bar{\taub}_0^\top (I_{N_0}-A_{00})^{-1} \Sigma_{\pi(0)} (I_{N_0}-A_{00})^{-\top}\bar{\taub}_0 \,.
\end{multline*}

For the high-level distribution we get
\[
P(Y|z)= \Ncal( \alpha z +\beta,\sigma_{Y|Z}^2)
\]
where we can identify all parameters, including the mean and variance of the Gaussian exogenous variable $R_0$, with the above Eqs~(\ref{eq:indent1},\ref{eq:biasconst},\ref{eq:tauconst}).

\textbf{Part 2: $n$-causes case ($n>1$)} 

We now assume ${(\tau_k',\omega_k',\pi')}_{k=1..n}$ such that the loss vanishes, but no such solution for $n+1$ causes.

\textbf{Properties of exact $n$-cause solutions}
Then such a solution can be linked to the 1-cause solution, which is guaranteed to exist according to our set of assumptions.
Indeed, the existence of the n-cause solution implies that the pushfoward interventional distribution of the low-level causal model by $\tau$, $\widehat{P}$ satisfies
\[
\widehat{P}^{(\ib)}(Y|\Zb=\zb) \sim \Ncal\left(\sum_k \alpha_k z_k+\beta, \sigma_{Y|Z}^2\right)
\]
and
\[
\widehat{P}^{(\omega(\ib))}(\Zb) \sim \prod_k \widehat{P}^{(0)}(Z_k-\omega_k(\ib_k))
\]
If we define the aggregate cause $\tilde{Z}=\sum_k \alpha_k z_k=\sum_k \alpha_k\tau_k(x)$, then we can rewrite the above model as
\[
\widehat{P}^{(\ib)}(Y|\tilde{Z}=\tilde{z}) \sim \Ncal\left(\tilde{z}+\beta, \sigma_{Y|Z}^2\right)
\]
and
\[
\widehat{P}^{(\omega(\ib))}(\tilde{Z}) \sim \prod_k \widehat{P}^{(0)}(\tilde{Z}-\sum_k\omega_k(\ib_k))
\]
which implies that concatenating the $\tau_k$ with multiplicative coefficient $\alpha_k$ leads to a valid 1-cause TCR, and must thus match the expressions we have found for it, up to a multiplicative constant.
\par
Moreover, the interventional consistency of the $n$-causes, which do not influence each other according to the assumed high-level causal graph, entails that any low-level intervention $\ib$ affects only high-level variable $Z_k$ through is components in $\pi(k)$. 
\par
We define $A_{1..n,1..n}$ by reordering the indices of $\Omega$ according to the assumed alignment $\pi$.
Consistency then implies (using the above 1 cause solution proof)
\begin{equation}\label{eq:ncausematconst}
    \begin{bmatrix}
    \bar{\taub}_{1}^\top ,&\dots,&0\\
    \vdots,&\ddots,&\vdots\\
    0,&\dots,&\bar{\taub}_{n}^\top
\end{bmatrix} (I_{\sum_k N_k}-A_{1..n,1..n})^{-1}\ib_{\Omega}=\begin{bmatrix}
    \omegab_{1}^\top \ib_{\pi_1}\\\vdots\\\omegab_{n}^\top \ib_{\pi_n}
\end{bmatrix}\,.
\end{equation}
This implies that 
\begin{equation}\label{eq:ncausevanish}
    \bar{\taub}_{k}^\top (I_{\sum_k N_k}-A_{1..n,1..n})^{-1}_{kj}=0\,\mbox{ for all }j\in \pi(l),l\neq k\,,
\end{equation}
where $(.)_{kj}$ indicates the matrix block corresponding to indices in $\pi(k)\times \pi(j)$.
Because the non-vanishing coefficients of $\bar{\tau_k}$ reflect the influences along the causal pathways from nodes of $\pi(k)$ to $Y$, the above entails that $(I_{\sum_k N_k}-A_{1..n,1..n})^{-1}_{kj}$ must vanish on the support of $\bar{\tau_k}$.
Indeed, otherwise Eq.~(\ref{eq:ncausevanish}) would indicate that causal pathways from nodes in $\pi(k)$ to $\pi(0)$ cancel each other, which is forbidden by our assumptions. 
\par
We thus deduce that any off-diagonal block element of $(I_{\sum_k N_k}-A_{1..n,1..n})^{-1}$ whose row component belongs to the support of any $\tau_k$ and whose column component belongs to the support of any $\omega_k$ must vanish. Indeed, otherwise the causes would influence each other.
In essence, this means that any node influencing the target must not influence any node though some causal pathway in another group with the same property.

\textbf{Identifiability of the $n$-cause solution}
We assume $n=n_{max}$ and consider a second $n$-cause solution, which we denote: ${(\tau_k',\omega_k',\pi')}_{k=1..n}$. 

\textit{If} $\pi'=\pi$ up to a permutation of the order of the causes and removal of low-level variables that do not belong to the support of neither any $\omega$.
Then identifiability of the corresponding 1-cause solution implies that each $\tau_k'$ is identified with each $\tau_k$ up to a multiplicative constant, because they both match the components of the 1-cause $\tau$ on their (identical) support $\pi(k)$
The same goes for $\omega_k'$ and $\omega_k$.
This corresponds to the conclusion of the Proposition. 

\textit{Otherwise}, $\pi'\neq\pi$ even up to a permutation of the order of the causes and removal of low-level variables that do not belong to the support of neither any $\tau$ nor $\omega$.
There should be an overlap between supports of omegas and taus of one cause of one solution with two different causes of the other solution.
Without loss of generality, because of the block-diagonal structure of $(I_{\sum_k N_k}-A_{1..n,1..n})^{-1}$ entailed by Eqs.~(\ref{eq:ncausematconst}-\ref{eq:ncausevanish}) for both solutions, this overlap implies that $\pi(k)$ for at least one $k$ can be further partitioned and reordered into two subgroups, such that the corresponding diagonal block of $(I_{\sum_k N_k}-A_{1..n,1..n})^{-1}$ can be turned in a block diagonal submatrix. 
This can be used to build a new alignment $\pi''$ for $n+1$ causes, and its associated tau and omega maps such that Eq.~(\ref{eq:ncausematconst}) will be again satisfied, leading to interventional consistency of the (n+1)-causes. Moreover, because the exogenous variables in $\Omega$ are assumed independent, the block diagonal structure of the newly defined matrix $(I_{\sum_k N_k}-A_{1..n+1,1..n+1})^{-1}$ entails that the covariance of the $n+1$ high-level cause variable will be diagonal, ensuring mutual independence between high-level causes.
This procedure exhibits the existence of an $(n+1)$-cause TCR, contradicting the original assumption that $n=n_{max}$.
This case is thus excluded.
\end{proof}

\section{ADDITIONAL THEORY}\label{app:suptheory}

\subsection{Reparametrizations  of reductions}\label{app:reparam}
In order to study invariance properties of TCR, we define transformations compatible with a class of reductions.  Let $\rho:\Zcal_1 \times \dots \times \Zcal_n \to \Zcal_1 \times \dots \times \Zcal_n $ be a continuous invertible transformation of the $n$-dimensional high-level cause vector. Then the transformation
\[
\tilde{\rho}:
\begin{bmatrix}
    Y\\
    \Zb
\end{bmatrix}
\mapsto
\begin{bmatrix}
        Y\\
    \rho(\Zb)
\end{bmatrix}
\]
is also continuous invertible. Among this class of transformations, we define an invertible reparametrization of a TCR as follows.

\begin{definition}\label{def:reparam}
An invertible reparametrization of a reduction for the class $\Tcal$ of $\tau$-maps and the class $\{\Hcal_{\gammab} \}_{\gammab\in\Gamma}$ satisfies the following properties. 
\begin{itemize}
    \item it is \emph{compatible} with the class of $\tau$-maps as follows: for any map $\tau\in\Tcal$, we have $\tilde{\rho}\circ \tau\in \Tcal$,
    \item it is \emph{compatible} with the high-level model class $\{\Hcal_{\gammab}\}$ as follows: for any 
model parameter $\gammab$, the unintervened and intervened distributions $P_{\Hcal,\gammab}(Y,\Zb)$  are such that there exist a parameter $\gammab'$ and a map between high-level interventions $\psi:\Jcal\to \Jcal$ such that the joint distributions of the transformed variables $(Y,\rho(\Zb))$ is compatible with unintervened and intervened distributions of $\Hcal_{\gammab'}$, in the sense that
\[
\tilde{\rho}_\#[P_{\Hcal,\gammab}^{(\jb)}(Y,\Zb)]=P_{\Hcal,\gammab'}^{(\psi(\jb))}(Y,\Zb)\,.
\]
\end{itemize}    
\end{definition}

\subsection{The case of a single target low-level variable}\label{app:singtar}
Whenever $\pi(0)$ is a singleton, $\tau_0$ is univariate and the target $Y$ essentially corresponds (up to trivial rescaling) to a single low-level variable.
We elaborate on the interpretation of Proposition~\ref{prop:analytic_solution} in this context.

Let us set $\bar{\taub}_0=1$ and fix the target index such that $\pi(0)=\{N\}$ without loss of generality.
Then the DAG constraints entail $A_{00}=0$ and the structural equations take the form
\begin{align}\label{eq:subsystsingletarget}
\Xb_{\pi(1)}&\coloneqq A_{11} \Xb_{\pi(1)}+\Ub_{\pi(1)} +\ib \,,\quad U_k \sim \Ncal(\mu_k,\sigma_k^2)    \\
Y &\coloneqq \ab_{01}^\top \Xb_{\pi(1)}+U_N
\end{align}
where $\ab_{01}$ is a column vector of coefficients of the low-level mechanism linking the target $Y$ to its causes in $\pi(0)$. 
Then the unique linear 1D TCR, up to a multiplicative constant, making the consistency loss vanish is given by
\begin{align}
\bar{\taub}_1 =&\, \ab_{01} \label{eqn:analyitcal_tauappsingelton}\\
\text{and} \quad \bar{\omegab}_1 =&\,  (I_{N-1}-A_{11})^{-\top}\bar{\taub}_1 =(I_{N-1}-A_{11})^{-\top}\ab_{01}\,. \label{eqn:analyitcal_omegaappsingleton}
\end{align}

This solution is easily interpretable: $\bar{\taub}_1$ identifies the ground truth mechanism linking $\Xb_{\pi(0)}$ to the target, while  $\bar{\omegab}_1$ traces the contribution of interventions on each endogenous variable to the target.
Indeed, this contribution is given by the ``reduced form'' map between exogenous values and endogenous values (see proof of Proposition~\ref{prop:analytic_solution} for more insights)
\[
\ib \mapsto (I_{N-1}-A_{11})^{-1}\ib\,,
\]
and by composing this mapping with mechanism $\ab_{01}$ we get the (shift) influence of interventions on the target
\[
\ib \mapsto \ab_{01}^{\top}(I_{N-1}-A_{11})^{-1}\ib= \bar{\omegab}_1^\top \ib\,.
\]

The mismatch between $\bar{\omegab}_1$ and $\bar{\taub}_1$ is due to the internal causal structure of the submodel described by eq.~\eqref{eq:subsystsingletarget}. Indeed, if there are no causal links within this subsystem, $A_{11}$ is a zeros matrix and
\[
 \bar{\omegab}_1 =  (I_{N-1})^{-\top}\bar{\taub}_1 =\bar{\taub}_1 =\ab_{01}\,,
\]
otherwise, the two maps will be different. The discrepancy between the vectors thus reflects the fact that the causal explanation links high-level endogenous variables and interventions on them by potentially complex low-level interactions that do not necessarily have a simple high-level interpretation. This justifies regularizing the consistency loss with an homogeneity loss in order to focus on explanations that exhibit congruent $\tau$ and $\omega$  maps.

\subsection{The case of linear chain SCMs}\label{app:linchain}

In the case of a chain SCM 
\[
\Xb_1\to \dots \to \Xb_{N-1} \to \Xb_{N}=Y
\]
the above linear setting gets the additional constraints (using a causal ordering of the variables) that the target's mechanism is sparse
\[
\ab_{01}^{\top}= [0,\,\dots,\,0,\,a_{N}]
\]
and the structure matrix of $\Xb_{\pi(1)}$ is subdiagonal 
\[
A_{11}=
\begin{bmatrix}
    0& 0 &\dots & 0& 0\\
    a_2&0 &\dots &0& 0\\
    0& a_3 &\dots &0& 0\\
     0& 0  &\dots &0& 0\\\
    0&0 &\dots & a_{N-1} & 0\\
\end{bmatrix}
\]
and as a consequence, the solution writes
\begin{align}
\bar{\taub}_1 =&\,[0,\,\dots,\,0,\,a_{N-1}]^\top\label{eqn:analyitcal_tauappchain}\\
\text{and} \quad \bar{\omegab}_1 =&\,  (I_{N-1}-A_{11})^{-\top}\bar{\taub}_1 =
\begin{bmatrix}
a_{2}.a_{3}.\dots .a_{N-1} \\
\vdots\\
a_{N-2} a_{N-1}\\
 a_{N-1}    
\end{bmatrix}
\,. \label{eqn:analyitcal_omegaappchain}
\end{align}
This solution is in line with our experimental results:
\begin{itemize}
    \item $\bar{\taub}_1$ has all its weight on the parent of the target.
    \item $\bar{\omegab}_1$ has a non-sparse distribution over the chains, decaying in the upstrean direction. This reflects that structure coefficients of $A_{11}$ are selected with absolute value inferior to one, such that the influence of ancestor nodes on the target decays with their distance to it on the graph. 
\end{itemize}

Transposing the chain example to the case of Proposition~\ref{prop:anasol2}, we can take the case were the direct parent $X_{N-1}$ of the target is left unintervened. In such a case, $\bar{\taub}_1$ may put its weight on both $X_{N-1}$ and its direct parent $X_{N-2}$, Proposition~\ref{prop:analytic_solution} provides two example solutions for different choices of $\pi(1)$, including or excluding $X_{N-1}$. In the most extreme case of dissimilarity between $\taub_1$ and $\omegab_1$, solution including $X_{N-1}$ in $\pi(1)$ puts all $\taub_1$'s weight on $X_{N-1}$, while  $\omegab_1$ has no weight on it (because it is unintervened). As a consequence, $\omegab_1$ and $\taub_1$ are orthogonal and the associated homogeneity loss vanishes. In contrast, the unique solution excluding $X_{N-1}$ from $\pi(1)$ have a larger cosine similarity and will thus be preferred by the homogenity-regularized loss.

\subsection{Loss of identifiability through unintervened variables}\label{app:nointernoident}

\begin{figure*}[htb]
\centering
    \includegraphics[width=0.5\linewidth]{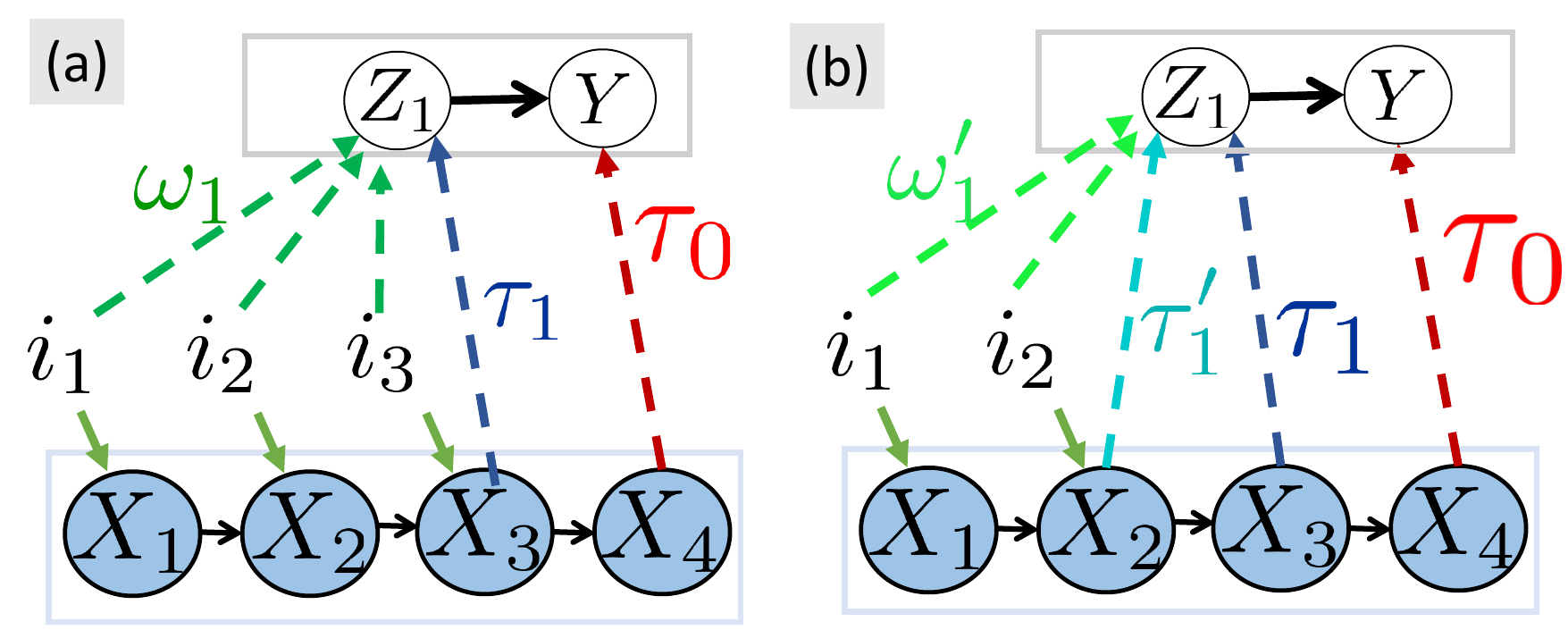}
	\caption{\small
	\textbf{1-cause TCR solutions on a chain graph.}
        Arrows indicate non-zero coefficients of each map.
        (a) Unique solution $\taub_1$ when interventions are performed on all nodes except the target.
        (b) Two solutions $\taub_1$ and $\taub'_1$ when only the first two nodes are intervened on.
        }
    \label{fig:chaintheory}
\end{figure*}

\begin{restatable}{proposition}{anasoltwo}\label{prop:anasol2}
Consider the setting of Prop.~\ref{prop:analytic_solution} with the exception that $\Omega \subsetneqq \pi(1)$ such that there is now a non-empty subset $S= \pi(1)\setminus \Omega$, such that $\Xb_{\Omega} \to \Xb_{S} \to \Xb_{\pi(0)}$. 
Then there exist at least two different linear 1D TCR such that $\Lcal_\mathrm{cons}=0$.
\end{restatable}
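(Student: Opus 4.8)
The plan is to prove non-identifiability constructively, by exhibiting two explicit linear $1$-cause reductions, both compatible with the fixed alignment $\pi(1)=\Omega\cup S$, that each achieve $\Lcal_\mathrm{cons}=0$ while having $\tau$-maps with \emph{disjoint} supports, so that they cannot be related by the multiplicative rescaling under which Prop.~\ref{prop:analytic_solution} identifies solutions. First I would reorder the low-level variables as $(\pi(0),S,\Omega)$ and record that, because $\Xb_\Omega\to\Xb_S\to\Xb_{\pi(0)}$, the matrix $A$ is block upper-triangular with diagonal blocks $A_{00},A_{SS},A_{\Omega\Omega}$, off-diagonal blocks $A_{0S}$ and $A_{S\Omega}$, and a vanishing direct block from $\Omega$ to $\pi(0)$. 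Throughout I invoke the no-cancellation assumption (footnote to Prop.~\ref{prop:analytic_solution}) to guarantee the total effects below are non-zero. I also use the cause-mechanism decomposition of Eq.~\eqref{eq:CMdec}, so that checking $\Lcal_\mathrm{cons}=0$ reduces to matching the interventional law of the cause $Z_1$ together with the intervention-invariant mechanism $Y\mid Z_1$.

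The first solution, call it $\mathrm{B}$, is obtained by merging $S$ into the downstream block $D=S\cup\pi(0)$ and invoking Prop.~\ref{prop:analytic_solution}. Writing $Y=\tilde\taub_0^\top\Xb_D$ with $\tilde\taub_0$ equal to $\bar\taub_0$ padded by zeros on $S$, and reading off $A_{DD}=\left[\begin{smallmatrix}A_{00}&A_{0S}\\0&A_{SS}\end{smallmatrix}\right]$ and $A_{D\Omega}=\left[\begin{smallmatrix}0\\A_{S\Omega}\end{smallmatrix}\right]$ from the reordered $A$, the whole of $\Omega$ is still intervened, so Prop.~\ref{prop:analytic_solution} applies and gives a zero-loss reduction with $\bar\taub_1^{(\mathrm{B})}=A_{D\Omega}^\top(I-A_{DD})^{-\top}\tilde\taub_0$ and $\bar\omegab_1^{(\mathrm{B})}=(I-A_{\Omega\Omega})^{-\top}\bar\taub_1^{(\mathrm{B})}$, where $\bar\taub_1^{(\mathrm{B})}$ is supported on $\Omega$ and vanishes on $S$. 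Since $\Lcal_\mathrm{cons}$ depends only on the law of $(Y,Z_1)$, re-embedding this map into the larger alignment $\pi(1)=\Omega\cup S$ with zero weight on $S$ leaves the loss at zero.

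The second solution, call it $\mathrm{A}$, instead loads $\bar\taub_1$ entirely on the mediator $S$: I choose its $S$-components equal to $A_{0S}^\top(I-A_{00})^{-\top}\bar\taub_0$ (the total effect of $\Xb_S$ on $Y$), its $\Omega$-components equal to zero, and the affine coefficient $\alpha=1$. The key observation is that this makes $Z_1$ equal to the part of $Y$ generated by $\Xb_S$, so that $Y=Z_1+R_0$ with $R_0=\bar\taub_0^\top(I-A_{00})^{-1}\Ub_{\pi(0)}$ an exogenous Gaussian independent of both $\Xb_{\pi(1)}$ (hence $Z_1$) and $\ib$, by independence of the exogenous variables. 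I would then verify the two terms of Eq.~\eqref{eq:CMdec}: the mechanism-consistency term vanishes because the conditional law of $Y\mid Z_1$ is $\Ncal(z_1+\mu_{R_0},\sigma_{R_0}^2)$, which is intervention-invariant and is exactly the learned high-level mechanism; and the cause-consistency term vanishes by taking $\bar\omegab_1^{(\mathrm{A})}$ equal to the $\Omega$-block of $(I-A_{\pi(1)})^{-\top}\bar\taub_1^{(\mathrm{A})}$, since the interventional shift of $Z_1$ then coincides with the total effect of $\ib_\Omega$ on $Y$ routed through $S$. This yields $\Lcal_\mathrm{cons}=0$ for $\mathrm{A}$.

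Finally I would conclude: under the no-cancellation assumption, $\bar\taub_1^{(\mathrm{A})}$ is non-zero on the non-empty set $S$ and $\bar\taub_1^{(\mathrm{B})}$ is non-zero on $\Omega$; since $S\cap\Omega=\emptyset$ their supports are disjoint, so neither map is a scalar multiple of the other, and they constitute two genuinely distinct TCRs (this mirrors the chain discussion in App.~\ref{app:linchain}). I expect the main obstacle to be the verification for solution $\mathrm{A}$ that a single intervention-independent mechanism simultaneously fits all interventional conditionals $\widehat P^{(\ib)}(Y\mid Z_1)$; this is precisely where the decomposition $Y=Z_1+R_0$ with $R_0$ independent of both $Z_1$ and $\ib$ does the work, paralleling the sufficiency step of Prop.~\ref{prop:analytic_solution}. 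A secondary bookkeeping point is handling the block-triangular inverse $(I-A_{DD})^{-1}$ correctly when re-grouping $S$ into the downstream block for solution $\mathrm{B}$.
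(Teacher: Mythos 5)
Your proposal is correct and follows essentially the same route as the paper: both exhibit the same two solutions, one loading $\taub_1$ on the mediator set $S$ (the paper's solution keeping the alignment $\pi$) and one loading it on $\Omega$ (the paper's solution with the regrouped alignment $\pi'(0)=\pi(0)\cup S$, $\pi'(1)=\Omega$), and both conclude distinctness because the two $\taub_1$-maps have disjoint, non-trivial supports. Your only deviation is a welcome one in rigor: where the paper simply asserts that the formulas of Prop.~\ref{prop:analytic_solution} still achieve $\Lcal_\mathrm{cons}=0$ under alignment $\pi$ even though $S$ is unintervened, you verify this directly through the decomposition $Y=Z_1+R_0$ with $R_0$ independent of both $Z_1$ and $\ib$, checking each term of the cause-mechanism decomposition of Eq.~\eqref{eq:CMdec}.
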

This result can also be illustrated with a chain graph, as shown in Fig.~\ref{fig:chaintheory}(b). 
If the parent node $X_3$ of $Y=X_4$ is unintervened, then one may choose either $Z_1=X_2$ or $Z_1=X_3$ (matching the solution of Fig.~\ref{fig:chaintheory}(a)) to minimize $\Lcal_\mathrm{cons}$. 
This is because both variables are equivalently mediating all performed interventions to $X_4$.
Note that each choice has it own benefit: $Z_1=X_3$, as a direct parent of $Y$, is a better statistical predictor of the value of $Y$.
However, if we focus on causal interpretability of the high-level representation, $Z_1=X_2$ is preferable because it is one of the variables intervened on at the low-level as enforced by the prior $P(\ib)$, and such that it will be associated to a non-zero weight in $\omegab_1$ for any solution satisfying $\Lcal_\mathrm{cons}=0$. 
\par

\begin{proof}
The low-level model follows the following SCM, with $P(\ib)$ non-trivial
\[
\Xb\coloneqq A \Xb+\Ub +\ib \,,\quad U_k \sim \Ncal(\mu_k,\sigma_k^2)
\]
such that $\Xb$, $A$ and $P$ take the block forms
\[
\Xb = \begin{bmatrix}
\Xb_{\pi(0)}\\\Xb_S\\ \Xb_{\Omega}
\end{bmatrix}\,,\quad 
A=
\left[
\begin{matrix}
A_{00}\, & A_{0S}&\boldsymbol{0}\\
\boldsymbol{0} & A_{SS} & A_{S\Omega}\\
\boldsymbol{0}\,& \boldsymbol{0}& A_{\Omega \Omega}
\end{matrix}
\right]\,,
\]
with $\pi(0)$ of size $N_0$, and $\pi(1)$ of size $N_1=N-N_0$ and $S$ of size $s$.
Then we know from the Proposition~\ref{prop:analytic_solution} that there is already a valid solution using $\pi$ as alignment. The only difference is that variables in $S$ are unintervened, which does not affect the ability of the solution to achieve $\Lcal_\mathrm{cons}=0$.
That solution would be compatible with interventions on $S$, but since $S$ is unintervened, we do not have uniqueness guarantees for this choice of $\pi$. 

Alternatively, if we choose $\pi'(0)=\pi(0)\cup S$ and $\pi'(1)=\pi(1)\setminus S=\Omega$, then, we can again apply Propostion~\ref{prop:anasol2}, and see that it provides a different solution with this alignment, which is compatible with the given problem (constructive transformation with constraint on the mapping $\tau_0$). 
Importantly, the key indeterminacy is for the map $\tau_1$, which will either put all its weight on elements in $S$ (direct parents of $\pi(0)$), or alternatively, put all its weights on elements in $\Omega$.
There is an additional, but trivial, indeterminacy for the map $\omega_1$: indeed, since $X_{S}$ is unintervened (as part of $\pi(0)$), the weights in $\omegab_1$ associated to these coefficients may take arbitrary values (since their associated component in $\ib$ remains zero). We do not consider these trivial indeterminacies (which do not affect the mapping $\omega_1$ on its domain, i.e. the support of the prior $P(\ib)$) by forcing the weights of $\omegab_1$ associated to unintervened variables to zero. 
\end{proof}

\subsection{Connection to causal abstractions}

\begin{proposition}\label{prop:abs_solution}
Assume
 the low-level SCM follows 
\[
\Xb\coloneqq A \Xb+\Ub +\ib \,,\quad U_k \sim \Ncal(\mu_k,\sigma_k^2)\,,\sigma_k^2>0\,,\,\,\ib\sim P(\ib)
\]
such that $\Xb$ and $A$  take the block forms
\[
\Xb = \begin{bmatrix}
\Xb_{\pi(0)}\\ \Xb_{\pi(1)}
\end{bmatrix}\,,\quad 
A=
\left[
\begin{matrix}
A_{00}\, & A_{01}\\
\boldsymbol{0}\, & A_{1 1}
\end{matrix}
\right]\,.
\]
Given an arbitrary choice of linear scalar target of the form $Y=\tau_0^\top \Xb=\bar{\tau}_0^\top \Xb_{\pi(0)}$, under the conditions of Proposition~\ref{prop:analytic_solution},  
the unique linear 1-cause TCR (up to a multiplicative constant) is associated to a 1-cause constructive abstraction
 given by 
\begin{align}
\bar{\taub}_1 =& \, A_{01}^\top (I_{\#\pi(0)}-A_{00})^{-\top}\bar{\taub}_0 \label{eqn:analyitcal_tau_abs}\\
 \quad \bar{\omegab}_1 =& \,  (I_{\#\pi(1)}-A_{1 1})^{-\top}\bar{\taub}_1\,,\\
 \bar{\taub}_{U,0}=& \bar{\taub}_0^\top(I_{N_0}-A_{00})^{-1}\,,\\\text{and }
 \bar{\taub}_{U,1}=& \bar{\tau}_1^\top (I_{N_1}-)^{-1}\,. \label{eqn:analyitcal_omega_abs}
\end{align}
\end{proposition}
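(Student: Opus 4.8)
The plan is to build directly on Proposition~\ref{prop:analytic_solution}, which already supplies the unique endogenous reduction $(\bar{\taub}_1,\bar{\omegab}_1)$ and the affine mechanism achieving $\Lcal_\mathrm{cons}=0$; here we only need to complete this reduction into a full causal abstraction by specifying how the low-level exogenous variables map to the high-level ones. First I would instantiate Proposition~\ref{prop:analytic_solution} with $\Omega=\pi(1)$, identifying $A_{0\Omega}=A_{01}$ and $A_{\Omega\Omega}=A_{11}$, so that $\bar{\taub}_1=A_{01}^\top(I_{\#\pi(0)}-A_{00})^{-\top}\bar{\taub}_0$ and $\bar{\omegab}_1=(I_{\#\pi(1)}-A_{11})^{-\top}\bar{\taub}_1$ as claimed, and the high-level mechanism reduces (up to rescaling, i.e.\ taking $\alpha=1$ and folding $\beta$ into the noise mean) to $Y=Z_1+R_0$.

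Next I would read off the exogenous maps from the reduced-form solution $\Xb=(I_N-A)^{-1}(\Ub+\ib)$. Using the block-triangular inverse already computed in the proof of Proposition~\ref{prop:analytic_solution}, one has $Z_1=\bar{\taub}_1^\top\Xb_{\pi(1)}=\bar{\taub}_1^\top(I_{N_1}-A_{11})^{-1}(\Ub_{\pi(1)}+\ib_{\pi(1)})$, whose exogenous part defines $R_1=\bar{\taub}_{U,1}^\top\Ub_{\pi(1)}$ with $\bar{\taub}_{U,1}^\top=\bar{\taub}_1^\top(I_{N_1}-A_{11})^{-1}$; note that this row vector coincides with $\bar{\omegab}_1^\top$, so in the linear case the exogenous map equals the intervention map. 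Substituting $\bar{\taub}_1$ into $Y=\bar{\taub}_0^\top\Xb_{\pi(0)}$ and using $\ib_{\pi(0)}=\boldsymbol{0}$ splits $Y$ into $Z_1$ plus a residual $R_0=\bar{\taub}_{U,0}^\top\Ub_{\pi(0)}$ with $\bar{\taub}_{U,0}^\top=\bar{\taub}_0^\top(I_{N_0}-A_{00})^{-1}$, which is exactly \eqref{eqn:analyitcal_omega_abs}.

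It then remains to verify that this pair of endogenous and exogenous maps satisfies the axioms of a constructive abstraction. I would check three things: (i) \emph{mechanism commutativity}, namely that pushing the low-level structural equations through $\tau$ yields precisely the high-level equations $Z_1=R_1+\omega_1(\ib)$ and $Y=Z_1+R_0$, which is immediate from the reduced-form identities above together with $\bar{\omegab}_1^\top=\bar{\taub}_1^\top(I_{N_1}-A_{11})^{-1}$; (ii) \emph{matching exogenous distributions}, namely that $R_0$ and $R_1$ are Gaussian by linearity and mutually independent because $\Ub_{\pi(0)}$ and $\Ub_{\pi(1)}$ are independent (the covariance $\Sigma_U$ is block-diagonal), so their joint law factorizes as required by $P(\Rb)=\prod_k P_{R_k}$; and (iii) \emph{interventional consistency}, which is exactly the content of $\Lcal_\mathrm{cons}=0$ together with the intervention map $\omega_1$, already granted by Proposition~\ref{prop:analytic_solution}. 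Since the alignment $\pi$ partitions both the endogenous and the exogenous indices into the non-overlapping blocks $\pi(0),\pi(1)$, the abstraction is constructive.

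The main obstacle is matching these maps precisely to the technical definition of a \emph{low soft abstraction} of \citet[Def.~9]{massidda2023causal}: one must exhibit the exogenous abstraction map as part of the data of the abstraction and verify the required surjectivity and measurability conditions, rather than merely the interventional-consistency identity that Proposition~\ref{prop:analytic_solution} provides. Establishing that the induced high-level noise law is exactly the pushforward of the low-level noise under $(\bar{\taub}_{U,0},\bar{\taub}_{U,1})$ — and not only consistent on the marginals of $(Y,Z_1)$ — is where the nondegeneracy assumption $\sigma_k^2>0$ and the block-triangular structure of $A$ do the essential work.
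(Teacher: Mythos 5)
Your proposal is correct and follows essentially the same route as the paper's proof: instantiate Proposition~\ref{prop:analytic_solution} with $\Omega=\pi(1)$ to obtain $(\bar{\taub}_1,\bar{\omegab}_1)$ and the affine mechanism, read off the exogenous maps $\bar{\taub}_{U,0}^\top=\bar{\taub}_0^\top(I_{N_0}-A_{00})^{-1}$ and $\bar{\taub}_{U,1}^\top=\bar{\taub}_1^\top(I_{N_1}-A_{11})^{-1}$ from the block-triangular reduced form, and verify pointwise commutativity $\tau(\Lcal^{(\ib)}(\ub))=\Hcal^{(\omega(\ib))}(\tau_U(\ub))$, which is exactly the paper's argument (including your observation that $\bar{\taub}_{U,1}=\bar{\omegab}_1$, which the paper uses implicitly when absorbing the intervention term). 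Your additional checks of exogenous independence and distributional consistency are sound but go slightly beyond what the paper's proof spells out.
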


\begin{proof}
    To have a valid constructive causal abstraction, we need to verify the existence of an additional constructive map $\tau_U$ for exogenous variables such that for all realizations $\ub$ of $\Ub$. 
    \[
    \tau (\Lcal^{(\ib)}(\ub))=\Hcal^{(\omega(\ib))}(\tau_U(\ub))
    \]
where $\Lcal^{(\ib)}(.)$ and $\Hcal^{(\jb)}(.)$ denote the mappings from endogenous to exogenous variable for the low- and high-level intervened models, respectively. 

Using
    \[\Lcal^{(\ib)}(\Ub)=
\Xb^{(\ib)} = (I_N-A)^{-1}(\Ub+\ib)
\]
we get
    \[
\Xb_{\pi(1)}^{(\ib)} = (I_{N_1}-)^{-1}(\Ub_{\pi(1)}+\ib_{\pi(1)})
    \]
    and 
    \[
\Xb_{\pi(0)}^{(\ib)} = (I_{N_0}-A_{00})^{-1}\Ub_{\pi(0)}+ (I_{N_0}-A_{00})^{-1}A_{01}\Xb_{\pi(1)}^{(\ib)}
    \,.
    \]
    Applying the $\tau_0$ map we get  the solution of Prop.~\ref{prop:analytic_solution}
      \[
Y^{(\ib)}=\bar{\taub}_0^\top\Xb_{\pi(0)}^{(\ib)} = \bar{\taub}_0^\top (I_{N_0}-A_{00})^{-1}\Ub_{\pi(0)}+\bar{\taub}_0^\top  (I_{N_0}-A_{00})^{-1}A_{01}\Xb_{\pi(1)}^{(\ib)}
    =\bar{\taub}_0^\top(I_{N_0}-A_{00})^{-1}\Ub_{\pi(0)}+\bar{\taub}_1^\top \Xb_{\pi(1)}^{(\ib)}
    \]
    moreover applying the ${\tau}_1$ map to the variables in $\pi(1)$ we get
      \[
\bar{\tau}_1^\top \Xb_{\pi(1)}^{(\ib)} = \bar{\tau}_1^\top (I_{N_1}-)^{-1}(\Ub_{\pi(1)}+\ib_{\pi(1)})
= \bar{\tau}_1^\top (I_{N_1}-A_{\Omega\Omega})^{-1}\Ub_{\pi(1)}
+\bar{\omegab}_1^\top \ib_{\pi(1)}
    \]
    So by defining the vectors $\bar{\taub}_{U,1}^\top=\bar{\tau}_1^\top (I_{N_1}-)^{-1}$ and $\bar{\taub}_{U,0}^\top=\bar{\taub}_0^\top(I_{N_0}-A_{00})^{-1}$ we get a valid constructive abstraction linking the low-level map $\ub\mapsto \Lcal^{(\ib)}(\ub)$ to the following high-level map (first component is $Y$, second is the cause $Z_1$)
    \[
\rb \mapsto    \Hcal^{(\jb)}(\rb) = \begin{bmatrix}
        r_0+r_1+j_1\\
        r_1+j_1
    \end{bmatrix}
    \]
    such that for all $\ub$
    \[
\Hcal^{(\bar{\omegab}_1^\top\ib_{\pi(1)})}\left(
\begin{bmatrix}
  \bar{\taub}_{U,0}^\top \ub_{\pi(0)}\\
    \bar{\taub}_{U,1}^\top \ub_{\pi(1)}
\end{bmatrix}
\right)=\tau ( \Lcal^{(\ib)}(\ub))
    \]\,.
\end{proof}

\section{ALGORITHM DETAILS}\label{app:supalgo}

\subsection{Gaussian consistency loss}\label{app:gaussloss}

As the KL divergence is hard to estimate in the non-parametric setting, we make a Gaussian approximation of this loss to get an analytical, differentiable expression.
Using the general formula for two n-dimensional Gaussian densities $P$ and $Q$
\[
\mathrm{KL}(P||Q) = \frac{1}{2}
\left[ 
(\mu_Q-\mu_P)^\top \Sigma_Q^{-1}(\mu_Q-\mu_P)+\mbox{tr}(\Sigma_Q^{-1}\Sigma_P)-\log \frac{|\Sigma_P|}{|\Sigma_Q|}-n
\right] \,.
\]

Parameters of the reduction are $\taub_k,\mub_Z,\mu_{Y|Z},f:z\to f(z),\omegab_k$
with
\begin{align*}
 {Z}^{(\omega(\ib))} &\sim {P}(z)=\Ncal(\mub_Z+W \ib,\Sigma_{\Zb}\,, \mbox{ with }W=[\omegab_1,...,\omegab_n]^\top\mbox{ and }\Sigma_{\Zb}=\mbox{diag}(\sigma_{Z,1}^2,...,\sigma_{Z,n}^2))\\
 {Y}^{(\omega(\ib))}|\zb &\sim {P}(Y|z)=\Ncal(f(\zb),\sigma_{Y|\Zb}^2)\,,\\
 \hat{Z}^{(\ib)} &= [\taub_1,...,\taub_n]^\top \Xb^{(\ib)}= T\Xb^{(\ib)}\,,\\
 \hat{Y}^{(\ib)} &= \taub_0^\top \Xb^{(\ib)} \,.
\end{align*}
Moreover, we estimate the second order properties of the simulator distribution for each intervention $\ib$
\begin{align*}
\hat{\boldsymbol{\mu}}^{(\ib)}_{\Xb} &= \langle \Xb^{(\ib)} \rangle\,,\\
\hat{\Sigma}^{(\ib)}_{\Xb} &=\left\langle\left(\Xb^{(\ib)}-\hat{\boldsymbol{\mu}}^{(\ib)}_{\Xb}\right)^\top \left(\Xb^{(\ib)}-\hat{\boldsymbol{\mu}}^{(\ib)}_{\Xb}\right)\right\rangle\,,\\
    \hat{\mub}^{(\ib)}_{\Zb} &= \langle \hat{Z}^{(\ib)} \rangle = T \hat{{\mub}}^{(\ib)}_{\Xb}\,,\\
    \hat{\mu}^{(\ib)}_Y &= \langle \hat{Y}^{(\ib)} \rangle = \taub_0^\top \hat{{\mub}}^{(\ib)}_{\Xb}\,,\\
        {\widehat{\Sigma}}^{(\ib)}_{\Zb} &=\left\langle \left(\widehat{\Zb}^{(\ib)}-\hat{\mub}^{(\ib)}_Z\right)\left(\widehat{\Zb}^{(\ib)}-\hat{\mub}^{(\ib)}_Z\right)^\top \right\rangle = T \hat{\Sigma}^{(\ib)}_{\Xb} T^\top\,,\\
    {\widehat{\sigma^2}}^{(\ib)}_{Z,k} &= \left({\widehat{\Sigma}}^{(\ib)}_{\Zb}\right)_{k,k}=\left\langle \left(\widehat{Z}_k^{(\ib)}-\hat{\mu}^{(\ib)}_{Z,k}\right)^2 \right\rangle = \taub_k^\top \hat{\Sigma}^{(\ib)}_{\Xb} \taub_k\,,\\
    {\widehat{\sigma^2}}^{(\ib)}_Y &=\left\langle \left(\hat{Y}^{(\ib)}-\hat{\mu}^{(\ib)}_Y\right)^2 \right\rangle = \taub_0^\top \hat{\Sigma}^{(\ib)}_{\Xb} \taub_0\,,\\
    \widehat{\boldsymbol{c}}^{(\ib)}_{\Zb Y} &=\left\langle \left(\hat{Y}^{(\ib)}-\hat{\mu}^{(\ib)}_Y\right)\left(\hat{\Zb}^{(\ib)}-\hat{\mub}^{(\ib)}_Z\right) \right\rangle = T \hat{\Sigma}^{(\ib)}_{\Xb} \taub_0\,,
\end{align*}
where $\langle \cdot \rangle$ denotes the empirical average. Using the KL between Gaussian variables, we can rewrite the consistency loss as
\begin{multline}
    \Lcal_\mathrm{cons}=\EE_{\ib\sim p(\ib)}\left[ \mathrm{KL}_z(\hat{P}^{(\ib)}(z)|\hat{P^{(\omega(\ib))}}(z))
    \right]
    +\EE_{z\sim \hat{P}^{(\ib)}\left(\Zb\right)}\left[
        \mathrm{KL}_Y\left(
            \hat{P}^{(\ib)}\left({Y}|{\Zb}=z\right)||{P}^{(0)}\left({Y}|{\Zb}=z\right)
        \right)
    \right]\\
    =\frac{1}{2}\EE_{\ib\sim p(\ib)}\left[\sum_k\left(
            \frac{({\mu}_{Z,k}+\omegab_k^\top \ib -\hat{\mu}^{(\ib)}_{Z,k})^2}{\sigma_{Z,k}^2} 
            + \frac{{\widehat{\sigma^2}}^{(\ib)}_{Z,k}}{\sigma_{Z,k}^2} 
            \right)
            -\ln{\left(\frac{|{\widehat{\Sigma}}^{(\ib)}_{\Zb}|}{\prod_k \sigma_{Z,k}^2}\right)}
            -n
        \right]\\
        +\frac{1}{2}\EE_{\ib \sim p(\ib),z\sim \hat{P}^{(\ib)}\left(Z\right)}\left[
            \frac{\left(
                f(z) -\hat{\mu}^{(\ib)}_Y-\left(\widehat{\boldsymbol{c}}^{(\ib)}_{\Zb Y}\right)^\top
                \left(\widehat{\Sigma}^{(\ib)}_{\Zb}\right)^{-1}
                (\zb-\hat{\mub}^{(\ib)}_Z)
            \right)^2}{\sigma_{Y|Z}^2} \right.\\
            \left.
            + \frac{
                {\widehat{\sigma^2}}^{(\ib)}_Y
                -\left(\widehat{\boldsymbol{c}}^{(\ib)}_{\Zb Y}\right)^\top
                \left(\widehat{\Sigma}^{(\ib)}_{\Zb}\right)^{-1}\widehat{\boldsymbol{c}}^{(\ib)}_{\Zb Y}
                }{\sigma_{Y|Z}^2
            } 
            -\ln{\left(\frac{
                {\widehat{\sigma^2}}^{(\ib)}_Y
             -\left(\widehat{\boldsymbol{c}}^{(\ib)}_{\Zb Y}\right)^\top \left(\widehat{\Sigma}^{(\ib)}_{\Zb}\right)^{-1}\widehat{\boldsymbol{c}}^{(\ib)}_{\Zb Y}
                }{\sigma_{Y|Z}^2
            }\right) } -1
        \right] \,.
\end{multline}

\section{Experimental Details}
\label{app:experimental_details}

\subsection{Linear Experiments}
\label{app:linear_experiments}
\begin{table}[htb]
\centering
\begin{tabular}{l c c}
\toprule
\textbf{Parameters} & \textbf{Linear} (Fig.~\ref{fig:linear_loss}) & \textbf{Two Branch} (Fig.~\ref{fig:two_branch}) \\
\midrule
learning rate $\lambda$ & $10^{-3}$ & $10^{-3}$ \\
\hline
learning rate scheduler & - & cosine annealing \\
\hline
No.\ repeated train.\ runs per seed & $1$ & $10$ \\
\hline
simulation paths $n_\mathrm{sim}$ & $10,000$ & $10,000$ \\
\hline
training epochs $N_\mathrm{ite}$ & $100$ & $600$ \\
\hline
simulation batch size $B$ & $128$ & $128$ \\
\hline
intervention batch size $B_i$ & $64$ & $512$ \\
\hline
overlap reg.\ $\eta_\mathrm{ov}$ \eqref{eqn:overlap} & $0$ & $0.1$ \\
\hline
balancing reg.\ $\eta_\mathrm{bal}$ \eqref{eqn:balancing} & $0$ & $10^{-3}$ \\
\bottomrule
\end{tabular}
\caption{\small \textbf{Experimental parameters and settings for the linear Gaussian experiments.}}
\label{tab:data_training_linear}
\end{table}

\paragraph{Sampling linear Gaussian low-level models}
For the adjacency matrix, we sample all non-zero entries uniformly in the interval $[-1, 1]$.
For general adjacency matrices, the lower triangular elements of the adjacency matrix are non-zero, where we assume that the target $Y$ has only incoming edges and the variables are arranged in topological order.
For the two-branch graph, values in the adjacency are set to zero accordingly.
For chain graphs, the first lower off-diagonal entries are non-zero.
The exogenous variables $\Ub$ and shift interventions $\ib$ are independent Gaussian with $U_j, i_j \sim \Ncal(0, 1)$ for $j=1, ..., N$.

\paragraph{Data and Training}
The data and training parameters are summarized in Table~\ref{tab:data_training_linear}.
All simulation data is generated before training and reused in each epoch.
We split the data into training ($70\%$), validation and test ($15\%$ each).
Since the training of the two-variable model would occasionally get stuck in local minima, we run each training with 10 different random initializations of the weights and select the model with the best total validation loss~\eqref{eqn:total_loss} at the end of training.
Furthermore, we use a cosine annealing learning rate scheduler with a final learning rate of $10^{-5}$.

\subsection{Double Well}
\label{app:double_well}

\paragraph{Simulation}
We model the ball moving in a double well potential $V(x) = x^4 - 4x^2$, shown in Figure~\ref{fig:double_well}(a), by the following equation of motion:
\begin{equation}
    m \ddot{x}(t) + k \dot{x}(t) + \frac{\partial}{\partial x} V(x(t)) = 0 \quad  \Rightarrow \quad m \ddot{x}(t) + k \dot{x}(t) + 4x(t)^3 -8x(t) = 0 \, ,
    \label{eqn:eom}
\end{equation}
where $x(t)$ is the position of the ball at time $t$, $\dot{x}(t)$ and $\ddot{x}(t)$ are the first and second time derivatives, respectively, $k$ is the friction coefficient and $m$ is the mass of the ball.
We can reformulate the second order ODE into a system of first order ODEs by introducing the velocity $v(t) = \dot{x}(t)$ as a variable:
\begin{align}
    \dot{x}(t) =&\, v(t) \nonumber\\
    \dot{v}(t) =&\, -\frac{1}{m} \left( k v(t) + 4x(t)^3 - 8x(t) \right) \,.
\end{align}
We solve the system of ODEs numerically on a grid of 101 time points $t_k$ for $k=0, \dots, 100$ equally spaced between $t=0$ and $t=10$ using a numerical integration method.
The initial conditions are $x(0) = -2.07414285 + 5\times10^{-7} \times \varepsilon_x$, with $\varepsilon_x\sim \mathrm{Uniform}(-1, 1)$ and $v(0) = 11$. The initial values are chosen such that there is a non-zero chance that the ball ends up in the left or right well without any additional interventions.
\par 
For shift interventions, we sample random velocity shifts $\Delta v(t_k)\sim \Ncal(0, 0.5)$.
The positions are unshifted.
In the numerical integration scheme, the shift interventions are implemented by splitting the integration domain in parts.
The ODE system is integrated from the initial conditions at $t_0$ to the next time grid at $t_1$.
Then the velocity at $t_1$ is shifted by $\Delta v(t_1)$ and used as the initial value for the next integration starting at $t_1$, and so on.
Similarly, we introduce independent stochasticity by adding noise to the velocity sampled from $\Ncal(0, 0.2)$ at each time step, mimicking intrinsic noise of the system.

\paragraph{Data and Training}
\begin{table}[htb]
\centering
\begin{tabular}{l c c}
\toprule
\textbf{Parameters} & \textbf{Double Well} (Fig.~\ref{fig:double_well})\\
\midrule
learning rate $\lambda$ & $5 \cdot 10^{-4}$\\
\hline
learning rate scheduler & - \\
\hline
No.\ repeated train.\ runs per seed & $1$ \\
\hline
simulation paths $n_\mathrm{sim}$ & $10,000$ \\
\hline
training epochs $N_\mathrm{ite}$ & $200$ \\
\hline
simulation batch size $B$ & $128$ \\
\hline
intervention batch size $B_i$ & $64$ \\
\hline
overlap reg.\ $\eta_\mathrm{ov}$ \eqref{eqn:overlap} & $0$ \\
\hline
balancing reg.\ $\eta_\mathrm{bal}$ \eqref{eqn:balancing} & $0$ \\
\bottomrule
\end{tabular}
\caption{\small \textbf{Experimental parameters and settings for the double well experiments.}}
\label{tab:data_training_double_well}
\end{table}
The data and training parameters are summarized in Table~\ref{tab:data_training_linear}.
All simulation data is generated before training and reused in each epoch.
We split the data into training ($70\%$), validation and test ($15\%$ each).

\subsection{Spring-Mass System}
\label{app:spring_mass_system}

\begin{table}[htb]
\centering
\begin{tabular}{l c c}
\toprule
\textbf{Parameters} & \textbf{4 masses with different weights} (Fig.~\ref{fig:spring_mass}) & \textbf{2 groups of masses} (Fig.~\ref{fig:spring_mass_grouped}) \\
\midrule
learning rate $\lambda$ & $10^{-4}$ & $10^{-3}$ \\
\hline
learning rate scheduler & cosine annealing & cosine annealing \\
\hline
No.\ repeated train.\ runs per seed & $5$ & $5$ \\
\hline
simulation paths $n_\mathrm{sim}$ & $10,000$ & $10,000$ \\
\hline
training epochs $N_\mathrm{ite}$ & $4,800$ & $1,800$ \\
\hline
simulation batch size $B$ & $128$ & $128$ \\
\hline
intervention batch size $B_i$ & $64$ & $512$ \\
\hline
overlap reg.\ $\eta_\mathrm{ov}$ \eqref{eqn:overlap} & $0.1$ & $0.1$ \\
\hline
balancing reg.\ $\eta_\mathrm{bal}$ \eqref{eqn:balancing} & $0.1$ & $0.1$ \\
\hline
spring constant $k$ & $10^{-3}$ & $10^{-3}$ \\
\hline
rest length $u_0$ & 1 & 1 \\
\hline
masses $m_i$ & $(0.5, 0.83, 0.17, 1.5)$ & all $1$ \\
\bottomrule
\end{tabular}
\caption{\small \textbf{Experimental parameters and settings for the spring mass system experiments.}}
\label{tab:data_training_spring_mass}
\end{table}

\paragraph{Simulation}

Let $M$ be the number of masses.
Then, $m_i\in \RR$, $\tilde{\vec{x}}_i(t)\in \RR^2$ and $\vec{v}_i(t)\in \RR^2$ represent the weight, position and velocity of mass $i=1, \ldots, M$ at time $t$.
$A\in \{0, 1\}^{M\times M}$ is the adjacency matrix encoding the spring connections, where $A_{ij}=1$ indicates that a spring connects masses $i$ and $j$.
The rest length at which the springs exert no force is denoted by $u_0$ and $k$ is the spring constant.
Both $u_0$ and $k$ are assumed to be the same for all springs.

The total force acting on mass $i$ at time $t$ is given by
\begin{equation}
    \vec{F}_i(t) = -k \sum_{j, A_{ij}=1} \left( \| \vec{u}_{ij}(t) \| - u_0 \right) \frac{\vec{u}_{ij}(t)}{\| \vec{u}_{ij}(t) \|}
\end{equation}
where $\vec{u}_{ij}(t) = \vec{x}_i(t) - \vec{x}_j(t)$ is the displacement vector from mass $j$ to mass $i$.
The equations of motion are
\begin{equation}
    \frac{\mathrm{d} \tilde{\vec{x}}_i(t)}{\mathrm{d} t} = \vec{v}_i(t), 
    \qquad
    \frac{\mathrm{d} \vec{v}_i(t)}{\mathrm{d} t} = \vec{a}_i(t),
    \quad
    \text{with}
    \quad
    \vec{a}_i(t) = \frac{\vec{F}_i(t)}{m_i} \, .
\end{equation}
We assume that the masses have no volume and do not collide or interact other than the forces coming from the springs.

We solve the system of ODEs numerically on a grid of 21 time points $t_k$ for $k=0, \dots, 20$ equally spaced between $t=0$ and $t=100$ using a numerical integration method.
The positions are initially set on a grid to $\tilde{\vec{x}}_1(t=0) = (0,0) + \tilde{\vec{x}}_\mathrm{offset}$, $\tilde{\vec{x}}_2(t=0) = (1,0) + \tilde{\vec{x}}_\mathrm{offset}$, $\tilde{\vec{x}}_3(t=0) = (0,1) + \tilde{\vec{x}}_\mathrm{offset}$ and $\tilde{\vec{x}}_4(t=0) = (1,1) + \tilde{\vec{x}}_\mathrm{offset}$, where $\tilde{\vec{x}}_\mathrm{offset} \sim \Ncal(0, 10)$ is a random offset that shifts the entire system.
The initial velocities are independently drawn as $\vec{v}_i(t=0)\sim \Ncal(0, 0.01)$.
We apply random independent velocity shifts $\Delta \vec{v}_i(t_k)\sim \Ncal(0, 0.005)$ at each time step and integrate it into the ODE solver in the same way as for the double well experiment in App.~\ref{app:double_well}.

The feature vectors $\Xb$ used to learn the TCR of the spring-mass system consists of all velocity values for all masses across all simulated time points.
The interventions $\ib$ are the corresponding velocity interventions.

\paragraph{Data and Training}
The data and training parameters are summarized in Table~\ref{tab:data_training_spring_mass}.
All simulation data is generated before training and reused in each epoch.
We split the data into training ($70\%$), validation and test ($15\%$ each).
Similar to the experiments on the two-branch linear graph in App.~\ref{app:linear_experiments}, we repeat the training runs with different weight initializations and use a cosine annealing learning rate scheduler.

\subsection{Additional results}\label{app:additres}
\subsubsection{Spring-Mass System without Regularization}
\label{app:spring_mass_noreg}
\begin{figure*}[htb]
\centering
    \includegraphics[width=0.5\linewidth]{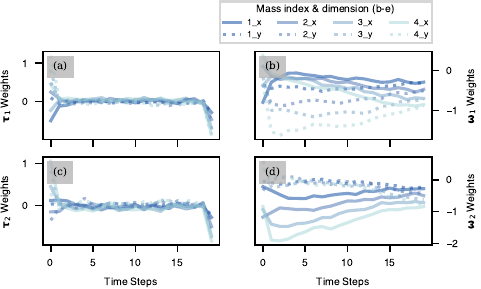}
	\caption{\small
	\textbf{Spring-mass system experiment without regularization.}
        Same experimental setup as described in Sec.~\ref{ssec:spring_mass_system} and App.~\ref{app:spring_mass_system} with the regularization turned off, \textit{i.e.}\ $\eta_\mathrm{ov}=\eta_\mathrm{bal} = 0$.
        The learned high-level mechanism is $f(\Zb) \approx -0.180 Z_1 + 0.125 Z_2 $.
        }
    \label{fig:spring_mass_noreg}
\end{figure*}

When running the TCR algorithm without regularization, it cannot be ensured that the found solutions correspond to different properties of the low-level system, as shown in Fig.~\ref{fig:spring_mass_noreg}.
There is significant mixing among the high-level variables, in particular the velocity in $x$-direction of the masses towards the end of the simulation appears in both high-level variables.

\subsubsection{Grouped Spring-Mass System}
\label{app:spring_mass_grouped}
\begin{figure*}[htb]
\centering
    \includegraphics[width=\linewidth]{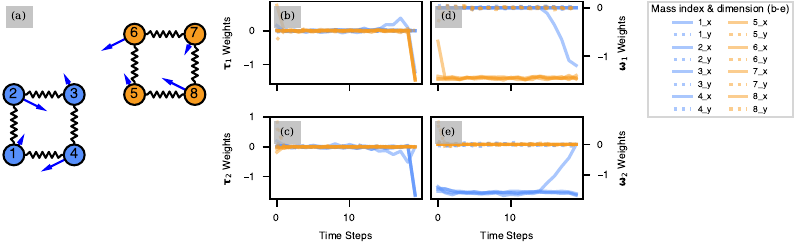}
	\caption{\small
	\textbf{Spring-mass system experiment with two groups of masses.}
        (a) Simulated system of eight point masses with equal weights connected by springs in two groups of 4 and with random initial velocity (blue arrows).
        In contrast to the experiment shown in Sec.~\ref{ssec:spring_mass_system}, target of the simulation is the center of mass speed in $x$-direction.
        (b-e) Learned $\taub$- and $\omegab$-weights corresponding to velocity components in $x$- and $y$-direction for a TCR with two high-level variables.
        The learned high-level mechanism is $f(\Zb) \approx -0.0866 Z_1 -0.0782 Z_2 $.
        }
    \label{fig:spring_mass_grouped}
\end{figure*}
We simulate two groups of four masses as shown in Fig.~\ref{fig:spring_mass_grouped}(a).
In contrast to the experiment shown in Sec.~\ref{ssec:spring_mass_system}, all masses have equal weight and the target is the center of mass velocity in $x$-direction at the end of the simulation.
The data and training parameters are summarized in Table~\ref{tab:data_training_spring_mass}.
\par
Since the only interactions between masses are mediated by the springs, as described in App.~\ref{app:spring_mass_system}, the two groups of masses do not influence each other and are thus fully independent.
The learned TCR identifies the two groups of masses as the two independent causes of the target.
This is reflected in the parameters shown in Fig.~\ref{fig:spring_mass_grouped} (b-e), where high-level variable $Z_1$ is predominantly influenced by the behavior of the second group (yellow) and variable $Z_2$ by the first group (blue).
Furthermore, we observe that the $y$-component of the velocity, which is irrelevant for the target here, is ignored by the TCR and filtered out.

\end{document}